\title{
A Recipe for Global Convergence Guarantee in Deep Neural Networks
}
\author{
        \hspace{-16pt}
        Kenji Kawaguchi\textsuperscript{*} \hspace{95pt}
        Qingyun Sun\textsuperscript{*}\\
}
\newcommand\blfootnote[1]{%
  \begingroup
  \renewcommand\thefootnote{}\footnote{#1}%
  \addtocounter{footnote}{-1}%
  \endgroup
}
\begin{document}


\maketitle

\begin{abstract}
Existing global convergence guarantees of (stochastic) gradient descent do not apply to practical deep networks in the practical regime of deep learning beyond the neural tangent kernel (NTK) regime. This paper proposes an algorithm, which is ensured to have global convergence guarantees in the practical regime beyond the NTK regime, under a verifiable condition called the expressivity condition. The expressivity condition is defined to be both data-dependent and architecture-dependent, which is the key property that makes our results applicable for practical settings beyond the NTK regime. On the one hand, the expressivity condition is theoretically proven to hold data-independently for fully-connected deep neural networks with narrow hidden layers and a single wide layer. On the other hand, the expressivity condition is numerically shown to hold data-dependently for deep (convolutional) ResNet with batch normalization with various standard image datasets. We also show that the proposed algorithm has generalization performances comparable with those of the heuristic algorithm, with the same hyper-parameters and total number of iterations. Therefore, the proposed algorithm can be viewed as a step towards providing theoretical guarantees for deep learning in the practical regime.
\end{abstract}

\section{Introduction}
The pursuit of global convergence guarantee has been one of the important aspects of optimization theory. However, ensuring global convergence is notoriously hard for first-order optimization algorithms used to train deep neural networks \citep{goodfellow2016deep}. Recently, some progress has been made on understanding the optimization aspect of overparametrized neural networks. Overparametrized neural networks can be trained to have zero training errors, interpolating all the training data points, and are recently shown to have global convergence guarantees in theoretical regimes  \citep{li2018learning,soltanolkotabi2018theoretical,kawaguchi2019gradient,daniely2019neural,bresler2020corrective,montanari2020interpolation,bubeck2020network}. These studies open up an insightful direction leading to the understanding of the optimization aspect of deep learning. 

However, there is still a significant gap between theory and practice. In applications such as computer vision, speech and natural language, a major reason for the success of deep learning in practice is its ability to learn representations with multiple levels of abstraction during training, as explained by  \citet{lecun2015deep}. In contrast, special types of neural networks studied in previous theories with global convergence guarantees are not allowed to learn representation during training, as the neural tangent kernels are approximately unchanged during training. Indeed, such special neural networks without the capability to learn representation are considered to have limitations compared to those with the capability  \cite{wei2019regularization,chizat2019lazy,yehudai2019power}.
Furthermore, the set of neural networks studied by previous theories have not yet practical deep neural networks used in practice with good generalization performances \citep{kawaguchi2017generalization, poggio2017theory}.  \blfootnote{*Equal contribution}

In this work, we propose a two-phase method to modify a base algorithm such that the modified algorithm enables practical deep neural networks to learn representation while having global convergence guarantees \textit{of all layers} under verifiable conditions. Our global convergence guarantees are applicable to a wide range of practical deep neural networks, including deep convolutional networks with  skip connection and batch normalization. For example, the verifiable conditions for global convergence guarantees are shown to be satisfied by both fully connected deep neural networks and deep  residual neural networks (ResNets) with convolutional layers. Our main  contributions can be summarized as:
\begin{itemize}
\item 
We propose a novel algorithm that  turns  any given first-order  training algorithm into a two-phase training algorithm.
\item
We prove that the resulting two-phase training algorithms find global minima for all layers of deep neural networks,    under the  \textit{expressivity condition}.
\item
The condition for global convergence is verified theoretically for fully connected networks with  last hidden layer being wide (as the number of training data points) and all other hidden layers being  narrow (as the input dimension).
\item
The condition for global convergence is verified numerically for the deep (convolutional) ResNet with bath-normalization on various standard  datasets.
\item
We compare the standard   training algorithm (SGD with momentum) and  the two-phase version of it with the same hyperparameters and total iterations. The two-phase version is shown to preserve the practical  generalization performances of the standard training  while providing global convergence guarantees.   
\end{itemize}

\section{Related work}
In this section, we discuss related studies  and their relationships with the contributions  of this paper.

\subsubsection{Over-parameterization} Over-parameterization has been shown to  help optimization of neural networks. More concretely, over-parameterization can remove suboptimal local minima \citep{soudry2016no} and  improve the quality of random initialization \citep{safran2016quality}. Furthermore, gradual over-parameterization (i.e., gradually increasing the number of parameters) is recently shown to improve steadily the quality of local minima \citep{kawaguchi2019effect}. The extreme over-parameterization that requires the number of neurons to approach  infinity is   used to prove global convergence \citep{mei2018mean,mei2019mean,chizat2018global,dou2020training,wei2019regularization,fang2020modeling}. Polynomial degrees of over-parameterization are also utilized for global convergence in the lazy training regime.

\subsubsection{Neural tangent kernel and lazy training}
It was shown that  neural networks under lazy training regime  (with a specific scaling and initialization) is nearly a linear model fitted with random features induced by the neural tangent kernel (NTK) at random initialization. Accordingly, in the lazy training regime, which is also called the NTK regime,  neural networks provably achieve globally minimum training errors. The lazy training regime is studied for both shallow (with one hidden layer) and deep neural networks and convolutional networks in  previous studies  \cite{zou2020gradient,li2018learning,jacot2018neural,du2019gradient,du2018gradient,chizat2019lazy,arora2019fine,allen2019learning,fang2020modeling,montanari2020interpolation}. 
\subsubsection{Lazy training and degree of overparametrization}
The global convergence guarantee in the lazy training regime was first proven  by using the significant overparametrization that requires the number of neurons per layer to be large polynomials in the number of data points  \cite{li2018learning,soltanolkotabi2018theoretical}. Later, the requirement on the degree of over-parametrization has been  improved to a small polynomial dependency  \cite{kawaguchi2019gradient,bresler2020corrective}. Furthermore, for two-layer networks with random i.i.d. weights and i.i.d. input data, the requirement was reduced to the number of training data points divided by the input dimension up to log factors, which is the optimal order in theory \citep{daniely2019neural,montanari2020interpolation,bubeck2020network}.

\subsubsection{Beyond lazy training regime}
However, it has been noted that neural networks in many real-world applications has weight parameters trained beyond the lazy training  regime, so that the learned features have better expressive power than random features \cite{yehudai2019power,misiakiewicz2019limitations,arora2019fine,arora2019exact}. Accordingly, a series of studies have demonstrated that the lazy training perspective of neural networks is not enough for understanding the success of deep learning  \cite{wei2019regularization,chizat2019lazy,yehudai2019power}. Indeed, there are also previous works for the regime beyond the lazy training \citep{kawaguchi2016deep, kawaguchi2019depth, jagtap2020adaptive, jagtap2020locally}.  
To overcome the weakness of lazy training, in this work, we present a novel method to use learned representation with a learned neural tangent kernel, instead of standard lazy training that use almost the random initialized neural tangent kernel. Our experiments on multiple ML benchmark datasets show empirically that our two-phase training method achieves comparable generalization performances with standard SGD training.

\subsubsection{Relation to this paper} 
Unlike previous work on the lazy training regime that use the NTK at random initialization, we allow the NTK to change significantly during training, to learn features and representation. In terms of the degree of overparametrization, the results in this paper achieve the linear order (in the number of training data points) without  the assumptions of  the i.i.d. weights and i.i.d random input. Our results  are also applicable for  deep neural networks in practical settings without degrading the generalization performances. On the other hand,  this paper further shows that the study of lazy training regime is also useful to understand the new two-phase training algorithm. Thus, we hope that the proposed two-phase training algorithm  becomes a bridge between practice and theory of neural tangent kernel.

\section{Model}
In this paper, we consider the empirical risk minimization problem. 
Let $((x_i,y_i))_{i=1}^n$ be a training dataset of  $n$ samples where  $S_{x}=\{x_i\}_{i=1}^n$ and $S_{y}=\{y_i\}_{i=1}^n$ are the set of training inputs and target outputs, with $x_i \in\Xcal \subseteq  \RR^{m_x}$  and $y_i \in\Ycal \subseteq  \RR^{m_y}$.  
Let $\ell: \RR^{m_{y}} \times\Ycal \rightarrow \RR_{\ge 0}$ be the loss of each sample that measures the difference between the prediction $f(x_{i},w)$ and the target $y_{i}$. 
The goal of empirical risk minimization is to find a prediction function $f(\hspace{1pt}\cdot\hspace{2pt};w): \RR^{m_x}  \rightarrow \RR^{1 \times m_y}$, by minimizing 
$$
\Lcal(w)= \frac{1}{n} \sum_{i=1}^n \ell(f(x_{i},w)\T,y_{i})
$$
where
 $w \in \RR^d$ is the parameter vector that contains all the trainable parameters, including the weights and bias terms of all layers of  deep neural networks. We define $w_{(l)}\in \RR^{d_{l}}$ to  be the  vector of all the trainable parameters at $l$-th layer. For any pair $(r,q)$ such that $1\le r \le q \le H+1$, let   $w_{(r:q)}=[w_{(r)}\T,\dots,w_{(q)}\T]\T  \in \RR^{d_{r:q}}$ 
where $w_{(r:q)}=w_{(r)}$ when $r=q$. With this notation, we  can write $w=w_{(1:H+1)}$.

 Here, $f(x, w)$ represents the \textit{pre-activation} output of the last layer of a neural network for a given $(x,w)$. Then the output over all data points is $f_X(w)=[f(x_{1}, w)\T, \dots,f(x_{n}, w)\T]\T \in \RR^{n \times m_y}$. The pre-activation of the last layer is an affine map given by  
$$
f_{X}(w)= \mathbf{h_X^{(H)}}W^{(H+1)}+b^{(H+1)},
$$ 
where  $W^{(H+1)}\in \RR^{ m_{H} \times m_{y}}$ and $b^{(H+1)}\in \RR^{1 \times m_{y}}$ are the weight matrix and the bias term   at the last layer. Here,   
 $$
 \mathbf{h_X^{(H)}} = h^{(H)}_X(w_{(1:H)})\in \RR^{n \times m_H}
 $$ is the matrix that contains the outputs at the last hidden layer.
In order to consider  layers with batch normalization, we allow  $h^{(H)}_X(w_{(1:H)})$ and $f(x_{i}, w)$ for each  $i \in \{1,\dots,n\}$ to depend on all data points $x_{1},\dots,x_n$. Here, $w_{(1:H)}$  represents the vector of  the trainable parameters of all hidden layers, including the   parameters of batch normalization.

\section{Algorithm}
We now describe the two-phase method as a modification of any given first-order base algorithm.   The modified algorithm is proven to have global convergence guarantee under verifiable conditions as shown in the next two sections.  
The base algorithm can be any given first-order algorithm, including both batch and stochastic algorithms, such as gradient descent and stochastic gradient descent with momentum and adaptive step-size. 
 
The  description of the algorithm  is presented in Algorithm \ref{al:train}, where   $\eta_t \odot g^{t}$  represent the Hadamard product of $\eta_t$ and $g^{t}$. 
 Here,  $g^{t}$ represents the rules of the parameter update that correspond to different base algorithms. For example, if we use the (mini-batch) stochastic gradient descent as the base algorithm, $g^{t}$  represents the (mini-batch) stochastic gradient of the loss function with respect to $w$ at the time $t$. 

The first phase of the training algorithm is the same as the base algorithm.
 Then a random Gaussian perturbation is added on all but last layer weights. 
 After the random perturbation, the second training phase starts. In the second training phase, the base algorithm is modified to preserve the rank of the NTK at time $\tau$ after random perturbation, as:
 \[\rank \left(\mathbf{K}(w^k)\right)\ge\rank \left(\mathbf{K}(w^\tau)\right), \quad \forall k = \tau, \tau+1,\dots, t\] 
where the NTK matrix, $\mathbf{K}(w) \in \RR^{nm_y\times nm_y}$, is defined by
$$
 \mathbf{K}(w)=\frac{\partial \vect(f_{X}(w)\T)}{\partial w} \left(\frac{\partial \vect(f_{X}(w)\T)}{\partial w}\right)\T.
$$ 
As two examples, the rank can be preserved by  lazy training on all layer weights or by only training the parameters in the last layer  in the second phase.
 In the next section, we will  develop the global convergence theory for Algorithm  \ref{al:train}.
 
\begin{algorithm}[t!]
\caption{Two-phase modification $\Acal$ of a base  algorithm with global convergence guarantees} \label{al:train} 
$ \ $
\begin{algorithmic}[1] 
\STATE {\bf Inputs:} an initial parameter vector  $w^{0}$,   a time $\tau$ and a base algorithm with updates sequence $(g_t)_t$ and learning rate sequence $(\eta_t )_{t}$.
\STATE
\textbf{$\rhd$ First training phase} 
\FOR {$t = 0,1,\ldots, \tau-1$} 
  \STATE 
  Update parameters: $w^{t+1} =w^{t} - \eta_t \odot g^{t}$
\ENDFOR
\STATE \textbf{$\rhd$ Random perturbation\\ } 
Add noise at time $\tau$,
$$w^{\tau}_{(1:H)}\leftarrow w^{\tau}_{(1:H)}+\delta,$$ where the noise vector $\delta = (\delta_1. \ldots, \delta_H)\in \RR^{d_{1:H}}$ is sampled from a non-degenerate Gaussian distribution: $\delta_h \sim N(0, \sigma_h^2 I_{d_{h}})$ for $h=1,\ldots, H$ .  
\STATE \textbf{$\rhd$ Second training phase} 
\FOR {$t = \tau,\tau+1,\ldots $} 
 \STATE 
  Update parameters: $w^{t+1} =w^{t} - \eta_t \odot g^{t}$,
  where the learning rate $(\eta _{t})_{t>\tau}$  is modified to satisfy the \textit{rank preserving}  condition: for $k = \tau, \tau+1,\dots, t$,
  \[\rank \left(\mathbf{K}(w^k)\right)\ge\rank \left(\mathbf{K}(w^\tau)\right).\]
\ENDFOR   
\end{algorithmic}
\end{algorithm}

\section{Theoretical analysis}
In this section, we prove that the parameter $w^{t}$ in Algorithm \ref{al:train} converges to a global minimum $w^*$ \textit{of all layers} 
under the expressivity condition. As a concrete example, we prove that fully-connected neural networks with softplus nonlinear activations and moderately wide  last hidden layer  satisfy the expressivity condition for all distinguishable training datasets. All proofs in this paper are deferred to Appendix. 

\subsection{Expressivity condition}
Making the right assumption is often the most critical step in theoretical analysis of deep learning. 
The assumption needs to be both weak enough to be useful in practice and strong enough for proving desired conclusions. 
It is often challenging to find the assumption with the right theory-practice trade-off, as typical assumptions that lead to desired conclusions are not weak enough to hold in practice, which contributes to the gap between theory and practice. 
We aim to find the right trade-off by proposing a data-architecture-dependent, time-independent, and  verifiable condition called the \textit{expressivity condition} as a cornerstone for global convergence results. The expressivity condition guarantees the existence of parameters that can interpolate all the training data. 
\begin{assumption}
\label{a:existance_of_w}
\emph{(Expressivity condition)} 
 There exists $w_{(1:H)}$ such that $\varphi(w_{(1:H)})\neq 0$, where 
$\varphi(w_{(1:H)}) := \det(\allowbreak[h_{X}^{(H)}(w_{(1:H)}), \mathbf{1}_{n}][h_{X}^{(H)}(w_{(1:H)}), \mathbf{1}_{n}]^{\top})$.
\end{assumption}
In the expressivity condition, the map $h_{X}^{(H)}$ depends on both architecture and dataset. Such dependency is essential for the theory-practice trade-off; i.e., we obtain a desired conclusion yet only for a certain class of paris of dataset and architecture.  We verify the expressivity condition in our experiments. The expressivity condition is also verifiable as demonstrated below.

\subsection{Real analyticity}
To prove the global convergence, we also require the function $ h^{(H)}_X $ to be real analytic. Since a composition of real analytic functions is real analytic, we only need to check whether each operation satisfies the real analyticity. The  convolution, affine map, average pooling, shortcut skip connection  are all real analytic functions. Therefore, the composition of these layers  preserve real analyticity.

We now prove that the  batch normalization function is also real analytic. The batch normalization that is applied to an output $z$ of an arbitrary coordinate can be written by 
$$
\BN_{\gamma,\beta}(z) = \gamma \frac{z-\mu}{\sqrt{\sigma^2 + \epsilon}} + \beta. 
$$
Here, $\mu$ and $\sigma^2$ depend also on other samples  as
$\mu = \frac{1}{|S|} \sum_{i\in S}  z_i$ and $\sigma^2 = \frac{1}{|S|} \sum_{i\in S} (z_i - \mu)^2$, where $S$ is an arbitrary subset of $\{1,2,\dots,n\}$ such that $z \in \{z_{i} :  i \in S\}$. Then, the following statement holds:   
\begin{proposition} \label{prop:analytic_bn}
Batch normalization function $(z,\beta,\gamma) \mapsto \BN_{\gamma,\beta}(z)$ is real analytic. 
\end{proposition}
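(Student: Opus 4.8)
The plan is to decompose $\BN_{\gamma,\beta}$ into a finite sequence of elementary operations, verify that each is real analytic, and then invoke the standard closure properties of the class of real analytic functions: finite sums, finite products, compositions, and reciprocals of nowhere‑vanishing real analytic functions are again real analytic, and the same holds for functions of several variables. Concretely, I would treat $\BN_{\gamma,\beta}(z)$ as a function of the variables $\big((z_i)_{i\in S},\beta,\gamma\big)$ on the open set $\RR^{|S|}\times\RR\times\RR$, since by definition $\mu$ and $\sigma^2$ depend on all the samples $z_i$ with $i\in S$.

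First I would note that $\mu=\frac{1}{|S|}\sum_{i\in S}z_i$ is linear, hence a polynomial, hence real analytic; and $\sigma^2=\frac{1}{|S|}\sum_{i\in S}(z_i-\mu)^2$ is a quadratic polynomial in the $z_i$, hence real analytic. Consequently $z-\mu$ and $\sigma^2+\epsilon$ are real analytic functions of the inputs. Then I would write $\BN_{\gamma,\beta}(z)=\gamma\,(z-\mu)\,(\sigma^2+\epsilon)^{-1/2}+\beta$ and observe that, once $(\sigma^2+\epsilon)^{-1/2}$ is shown to be real analytic in the inputs, the whole expression is obtained from real analytic functions by multiplication and addition, and so is real analytic.

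The one step that requires care — and the main (only) obstacle — is the square root: $t\mapsto t^{-1/2}$ is real analytic on the open set $(0,\infty)$ but fails to be analytic (indeed not even differentiable) at $t=0$. This is handled by the regularizer: since $\sigma^2\ge 0$ and $\epsilon>0$, the function $\sigma^2+\epsilon$ takes values in $[\epsilon,\infty)\subset(0,\infty)$, so the composition $\big((z_i)_{i\in S}\big)\mapsto(\sigma^2+\epsilon)^{-1/2}$ is a composition of a real analytic function with $t\mapsto t^{-1/2}$ restricted to an open set contained in its domain of analyticity, hence real analytic. (If desired, one can make $t\mapsto t^{-1/2}=\exp\!\big(-\tfrac12\log t\big)$ on $(0,\infty)$ explicit, or expand $(t_0+s)^{-1/2}=t_0^{-1/2}\sum_{k\ge 0}\binom{-1/2}{k}(s/t_0)^k$ for $|s|<t_0$ to exhibit the convergent power series at each $t_0>0$.) Combining these observations yields the claim, and the same reasoning shows that composing a batch normalization layer with the other real analytic layers listed above (convolution, affine map, average pooling, skip connections) keeps $h^{(H)}_X$ real analytic.
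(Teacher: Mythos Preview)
Your proposal is correct and follows essentially the same route as the paper: decompose $\BN_{\gamma,\beta}$ into polynomial pieces ($\mu$, $\sigma^2$, $z-\mu$), note that the square root (equivalently $t\mapsto t^{-1/2}$) is real analytic on $(0,\infty)$ and that the regularizer $\epsilon>0$ keeps $\sigma^2+\epsilon$ in that domain, and then invoke closure of real analytic functions under sums, products, quotients with nonvanishing denominator, and composition. The only cosmetic difference is that the paper phrases the last step as a quotient by $\sqrt{\sigma^2+\epsilon}$ rather than a product with $(\sigma^2+\epsilon)^{-1/2}$, which is the same argument.
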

We also require the activation function to be analytic. For example, sigmoid, hyperbolic tangents  and softplus activations $\sigma(z)=\ln(1+\exp(\varsigma z))/\varsigma$ are all real analytic functions, with any hyperparameter $\varsigma>0$.  The softplus activation can approximate the ReLU activation  for any desired accuracy as 
$$
\sigma(x) \rightarrow \mathrm{relu}(x) \text{ as } \varsigma\rightarrow \infty.
$$
Therefore, the function $h^{(H)}_X$ is real analytic for  a large class of neural networks (with batch normalization) such as the standard deep  residual networks \citep{he2016identity} with real analytic approximation of ReLU activation via  softplus.

\subsection{Global convergence}
In the following, we assume that the loss function satisfies assumption \ref{a:loss}. 
\begin{assumption}\label{a:loss}
\emph{(Use of common loss criteria)} For any $i \in \{1,\dots,n\}$, the function $\ell_i:q\mapsto\ell(q,y_{i}) \in \RR_{\ge 0}$ is differentiable and convex, and $\nabla \ell_{i}$ is $L_{\ell}$-Lipschitz: i.e.,  $\|\nabla \ell_{i}(q) - \nabla \ell_{i}(q')\| \le L_\ell \|q - q'\|$ for all $q,q' \in \RR$. 
\end{assumption}
Assumption \ref{a:loss} is satisfied by standard loss functions such as the squared loss  $\ell(q,y)= \|q-y\|_2^2$ and
cross entropy loss  $\ell(q,y)=-\sum_{k=1}^{d_y}y_k \log \frac{\exp(q_k)}{\sum_{k'}\exp(q_{k'})}$.
Although the objective function $\Lcal:w \mapsto \Lcal(w)$ used to train a neural network is non-convex in $w$, the loss criterion $\ell_{i}:q  \mapsto\ell(q, y_{i})$ is often convex in $q$. 

Before we state the main theorem, we define the following notation. Let $w^{*} \in \RR^d$ be a global minimum of all layers; i.e., $w^{*}$ is a global minimum of $\Lcal$.  Define $\nu=[\mathbf{0}_{d_{1:H}}\T, \mathbf{1}_{d_{H+1}}\T]\T$ where $\mathbf{0}_{d_{1:H}} \in \RR^{d_{1:H}}$ is the column vector of all entries being zeros and $\mathbf{1}_{d_{H+1}}\in \RR^{d_{H+1}}$ is the column vector of all entries being ones. 
 Let $R^{2}= \min_{\bar{w}_{(H+1)}^* \in Q }\EE[\|\bar w^{*}_{(H+1)}- w^\tau_{(H+1)}\|^{2}]$ where $Q=\argmin_{w_{(H+1)}} \Lcal([(w_{(1:H)}^{\tau})\T , (w_{(H+1)})\T]\T)$.  Now we are ready to state one of our main theorems.
\begin{theorem} \label{thm:general}
  Suppose $H\ge2$, assumptions \ref{a:existance_of_w}  and  \ref{a:loss} hold. Assume that the function $ h^{(H)}_X $ is real analytic. Then, with probability one over a randomly sampled $\delta$, the following two statements hold:
\begin{enumerate}[leftmargin=0.5cm]
\item[(i)] (Gradient descent) if $g^{t}=\nabla_{w^{t}_{(H+1)}} \Lcal(w^{t})$ and $\eta_t = \frac{1}{L_{H}} \nu$ for $t\ge \tau$ with $L_{H}=\frac{L_{\ell}}{n}  \sum_{i=1}^n \|[h^{(H)}(x_{i},w_{(1:H)}^{\tau}), 1]\|_2^2$,
then  for any $t> \tau$,
$$
\Lcal(w^{t}) - \Lcal(w^{*} )\le  \frac{R^{2}L_{H} }{2(t-\tau)}. 
$$

\item[(ii)] 
(SGD) if $\EE [g^{t}|w^{t}]=\nabla_{w^{t}_{(H+1)}} \Lcal(w^{t})$ (almost surely), $\EE[\|g^{t}\|^{2}]\le G^{2} $ and  $\eta_t= \bar \eta_t\nu$ for $t\ge \tau$ with $\bar \eta_t \in \RR$ satisfying that $\bar\eta_t \ge 0$, $\sum_{t=\tau}^{\infty} \bar\eta_t^2 < \infty $ and $\sum_{t=\tau}^{\infty} \bar\eta_t = \infty$, then  for any $t> \tau$,
\begin{align*}
\EE[\Lcal(w^{t^*}) ]  -\Lcal(w^{*} )\le    \frac{R^{2}+G^{2}  \sum_{k=\tau}^t\bar \eta_{k}^{2}}{ 2 \sum_{k=\tau}^t \bar \eta_{k}},
\end{align*}
where $t^*\in \argmin_{k \in \{\tau,\tau+1,\dots, t\}}\Lcal_{}(w_{}^{k})$.  
\end{enumerate}
 
\end{theorem}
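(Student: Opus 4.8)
The plan is to exploit that the second phase, under the stated choices of $g^t$ and $\eta_t$, is convex optimization over the last layer only. Since $\eta_t = \bar\eta_t\nu$ (resp. $\eta_t=\frac{1}{L_H}\nu$) and $\nu$ vanishes on the hidden-layer coordinates, the updates $w^{t+1}=w^t-\eta_t\odot g^t$ move only $w_{(H+1)}$; hence $w^t_{(1:H)}=w^\tau_{(1:H)}$ for all $t\ge\tau$ and $\Lcal(w^t)=F(w^t_{(H+1)})$, where $F:w_{(H+1)}\mapsto\Lcal([(w^\tau_{(1:H)})\T,(w_{(H+1)})\T]\T)$. Because $f_X(w)=[h^{(H)}_X(w^\tau_{(1:H)}),\mathbf{1}_n]\begin{bmatrix}W^{(H+1)}\\ b^{(H+1)}\end{bmatrix}$ is affine in $w_{(H+1)}$ and $\Lcal$ depends on $w$ only through $f_X(w)$ with each $\ell_i$ convex (Assumption \ref{a:loss}), $F$ is convex; and a chain rule together with an operator-norm estimate using the $L_\ell$-Lipschitzness of $\nabla\ell_i$ shows $\nabla F$ is Lipschitz with constant exactly $L_H=\frac{L_\ell}{n}\sum_i\|[h^{(H)}(x_i,w^\tau_{(1:H)}),1]\|_2^2$. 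So phase two amounts to running (stochastic) gradient descent on a convex objective that is also $L_H$-smooth in case (i).

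Next I would show that $\min_{w_{(H+1)}}F=\Lcal(w^*)$. The function $\varphi$ of Assumption \ref{a:existance_of_w} is real analytic on the connected domain $\RR^{d_{1:H}}$, being the composition of the real-analytic map $h^{(H)}_X$ with the polynomial $\det$; by the expressivity condition it is not identically zero, hence its zero set is Lebesgue-null. Since $\delta$ is a non-degenerate Gaussian it has a density with respect to Lebesgue measure, so with probability one the post-perturbation parameter satisfies $\varphi(w^\tau_{(1:H)})\ne 0$, i.e., $[h^{(H)}_X(w^\tau_{(1:H)}),\mathbf{1}_n]\in\RR^{n\times(m_H+1)}$ has full row rank $n$. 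Then $M\mapsto[h^{(H)}_X(w^\tau_{(1:H)}),\mathbf{1}_n]M$ is onto $\RR^{n\times m_y}$, so the output $f_X(w^*)$ is realizable by some choice of last-layer parameters, which therefore attains $F$-value $\Lcal(w^*)$; conversely every value of $F$ equals $\Lcal$ of a full parameter vector and is thus $\ge\Lcal(w^*)$ by global optimality of $w^*$. Hence $\min F=\Lcal(w^*)$ and $Q\ne\emptyset$. (The same full-rank fact makes the last-layer block of the Jacobian, and hence $\mathbf{K}$, have maximal rank $nm_y$ throughout phase two, so the rank-preserving requirement of Algorithm \ref{al:train} is met automatically.)

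Given these two reductions, parts (i) and (ii) follow from the textbook convergence rates. For (i) I would apply the standard descent lemma for a convex $L_H$-smooth function with step $1/L_H$, giving $F(w^t_{(H+1)})-\min F\le\frac{L_H\|w^\tau_{(H+1)}-\bar w^*_{(H+1)}\|^2}{2(t-\tau)}$ for any $\bar w^*_{(H+1)}\in Q$; substituting $\Lcal(w^t)=F(w^t_{(H+1)})$, $\min F=\Lcal(w^*)$, and then taking expectation over $\delta$ and the minimum over $Q$ in the definition of $R^2$ yields the stated bound. For (ii) I would expand $\|w^{t+1}_{(H+1)}-\bar w^*_{(H+1)}\|^2$, use $\EE[g^t\mid w^t]=\nabla F(w^t_{(H+1)})$ with convexity of $F$ and $\EE\|g^t\|^2\le G^2$, then sum from $\tau$ to $t$ (telescoping) and divide by $2\sum_{k=\tau}^t\bar\eta_k$; here $\Lcal(w^{t^*})=\min_k\Lcal(w^k)$ lower-bounds the $\bar\eta_k$-weighted average of the iterate losses, and the conditions $\sum\bar\eta_k^2<\infty$, $\sum\bar\eta_k=\infty$ make the right-hand side vanish as $t\to\infty$.

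The one place the hypotheses genuinely bite — and the step I expect to be the crux — is the passage from ``there exists a single $w_{(1:H)}$ with $\varphi(w_{(1:H)})\ne 0$'' to ``$\varphi(w^\tau_{(1:H)})\ne 0$ almost surely.'' This is exactly what real analyticity plus an absolutely continuous perturbation delivers: the zero set of a not-identically-zero real-analytic function has measure zero, and a Gaussian-perturbed parameter misses it with probability one. Everything downstream of this fact — convexity and the smoothness constant of $F$, surjectivity of the last layer, and the two convergence rates — is routine.
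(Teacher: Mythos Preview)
Your proposal is correct and follows essentially the same approach as the paper's own proof: reduce the second phase to convex (and, in case (i), $L_H$-smooth) optimization over the last layer via the frozen hidden features; use real analyticity of $\varphi$ together with Assumption~\ref{a:existance_of_w} and absolute continuity of the Gaussian perturbation to get full row rank of $[h^{(H)}_X(w^\tau_{(1:H)}),\mathbf{1}_n]$ almost surely (the paper isolates this as a lemma and cites \citet{mityagin2015zero}); conclude $\min F=\Lcal(w^*)$ by surjectivity; and then invoke the standard descent-lemma and SGD telescoping arguments, exactly as the paper does.
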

In particular, Theorem \ref{thm:general} part (ii) shows that  if we choose $\bar \eta_t \sim O(1/\sqrt{t})$, we have $\lim_{t\rightarrow \infty} \frac{\sum_{k=\tau}^t \bar \eta_k^2}{\sum_{k=\tau}^t \bar \eta_k}=0$   and the optimality gap becomes
$$
\EE[\Lcal(w^{t^*}) ] -\Lcal(w^{*} )= \tilde O(1/\sqrt{t}).
$$ 

\subsection{Example}
As a concrete example that satisfies all the conditions in theorem \ref{thm:general}, we  consider full-connected deep networks using softplus activation with a wide last hidden layer. In the case of fully-connected networks, the output of the last hidden layer can be simplified to 
\begin{align} \label{eq:fully_connected_1}
h^{(H)}_X(w_{(1:H)})_{ij}=h^{(H)}(x_{i},w_{(1:H)}))_{j} \in \RR,
\end{align}
where $h^{(l)}(x_{i},w_{(1:l)}) \in \RR^{1 \times m_l}$ is defined by
\begin{align} \label{eq:fully_connected_2}
h^{(l)}(x_{i},w_{(1:l)})=\sigma(h^{(l-1)}(x_{i},w_{(1:l-1)})W^{(l)}+b^{(l)})
\end{align}
for $l=1, 2, \dots, H$ with  $h^{(0)}(x_{i},w_{(1:0)}):=x_i\T \in \RR^{1 \times m_x}$. Here,   $W^{(l)}\in \RR^{ m_{l-1} \times m_{l}}$ and $b^{(l)}\in \RR^{1 \times m_{l}}$ are the weight matrix and the bias term  of the $l$-th  hidden layer. Also, $m_l$ represents the number of neurons at the $l$-th hidden layer. 
Since $h^{(H)}_X$ is the composition of affine functions and real analytic activation functions (i.e.,  softplus activation $\sigma$), the function $ h^{(H)}_X $ is real analytic.

In theorem \ref{thm:fcnet}, we show that the expressivity condition is also satisfied for fully-connected networks, for training datasets that satisfy the following input distinguishability assumption. 
\begin{assumption}\label{a:input}
\emph{(Input distinguishability)}
$\|x_i\|^2- x_i\T x_j > 0$ for any $x_i,x_j \in S_{x}$ with $i\neq j$.
\end{assumption}

\begin{theorem} \label{thm:fcnet}
Suppose assumption \ref{a:input} hold. Assume that $h_X^{(H)}$ is defined by equations \eqref{eq:fully_connected_1}-\eqref{eq:fully_connected_2}  with softplus activation $\sigma$ and $H\ge2$ such that $\min(m_1,\dots,m_{H-1})\ge \min(m_x, n)$, and $m_H \ge n$. Then, assumption \ref{a:existance_of_w} hold true.
\end{theorem}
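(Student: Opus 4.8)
The plan is to verify Assumption~\ref{a:existance_of_w} by exhibiting a single weight vector $w_{(1:H)}$ for which $\varphi(w_{(1:H)})\neq0$. Note that $\varphi(w_{(1:H)})\neq0$ is equivalent to the $n\times(m_H+1)$ matrix $[h_X^{(H)}(w_{(1:H)}),\mathbf{1}_{n}]$ having full row rank $n$ (its Gram matrix is then positive definite), which in turn holds as soon as some $n$ of its columns are linearly independent. Since $m_H\ge n$, it therefore suffices to choose $w_{(1:H)}$ so that the first $n$ columns of $h_X^{(H)}$ already form a nonsingular $n\times n$ matrix. I would build such a $w_{(1:H)}$ in two stages: first fix the hidden layers $1,\dots,H-1$ so that the $n$ inputs to the last hidden layer stay pairwise distinct, and then design the $H$-th layer so that $n$ of its neurons turn those distinct vectors into a nonsingular block. (Since $h_X^{(H)}$ is real analytic, $\varphi$ is real analytic, so the good set of $w_{(1:H)}$ is automatically dense once it is nonempty; but only nonemptiness is needed.)

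\emph{Stage 1 (layers $1,\dots,H-1$).} Assumption~\ref{a:input} forces $x_1,\dots,x_n$ to be pairwise distinct, since adding $x_i^{\top}(x_i-x_j)>0$ to $x_j^{\top}(x_j-x_i)>0$ gives $\|x_i-x_j\|^2>0$. Set $d_0:=\min(m_x,n)$. Because $m_1\ge d_0$, pick for the first layer a scaled linear map $\RR^{m_x}\to\RR^{d_0}$ that is injective on $\{x_1,\dots,x_n\}$ --- a linear embedding if $m_x\le n$, a generic linear projection if $m_x>n$ (a generic direction already separates $n$ distinct points) --- padded with zeros into an $m_x\times m_1$ weight matrix; followed by softplus, which is strictly increasing, hence injective, this keeps the $n$ images distinct. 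For $l=2,\dots,H-1$ (present only when $H\ge3$), take generic weight matrices: a generic linear map is injective on any fixed finite set, and injectivity of softplus again preserves distinctness. The result is a family of pairwise distinct vectors $z_i:=h^{(H-1)}(x_i,w_{(1:H-1)})\in\RR^{m_{H-1}}$; when $H=2$ this stage is just the first layer, with $z_i=\sigma(x_i^{\top}W^{(1)}+b^{(1)})$.

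\emph{Stage 2 (layer $H$).} Since the $z_i$ are distinct, the set of $u\in\RR^{m_{H-1}}$ with $u^{\top}(z_i-z_j)=0$ for some $i\neq j$ is a finite union of hyperplanes, so I can pick $u$ with $t_i:=u^{\top}z_i$ all distinct. Take the first $n$ columns of $W^{(H)}$ equal to $c\,u$ (remaining columns arbitrary) and the $j$-th entry of $b^{(H)}$ equal to $b_j$ for $j=1,\dots,n$, where $b_1,\dots,b_n$ are chosen so that $\tilde B:=\big[\sigma^{(k)}(b_j)/k!\big]_{k=0,\dots,n-1;\,j=1,\dots,n}$ is nonsingular. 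Such $b_j$ exist because $\sigma,\sigma',\dots,\sigma^{(n-1)}$ are linearly independent functions: a nontrivial constant-coefficient linear relation among them would make softplus an exponential polynomial, whereas $\sigma'$ is the logistic sigmoid --- monotone, bounded, non-constant, hence not an exponential polynomial --- and linear independence of $n$ functions guarantees $n$ evaluation points with nonsingular evaluation matrix. With this choice the first $n$ columns of $h_X^{(H)}$ are $M(c):=\big[\sigma(c t_i+b_j)\big]_{i,j=1}^{n}$. Expanding each entry in its Taylor series in $c$ about $0$ gives $M(c)=V\,\mathrm{diag}(1,c,\dots,c^{n-1})\,\tilde B+\big(\text{terms of order }c^n\text{ and higher}\big)$ with $V:=\big[t_i^{\,k}\big]_{i=1,\dots,n;\,k=0,\dots,n-1}$. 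A short combinatorial computation --- in $\det M(c)$ only tuples of derivative orders with pairwise distinct values survive, and the minimal total order $0+1+\cdots+(n-1)$ is attained exactly by permutations of $(0,\dots,n-1)$ --- shows the coefficient of $c^{\,n(n-1)/2}$ in $\det M(c)$ equals $\pm(\det V)(\det\tilde B)$, which is nonzero because $V$ is a Vandermonde matrix with distinct nodes $t_i$ and $\tilde B$ was chosen nonsingular. Hence $\det M(c)\neq0$ for all sufficiently small $c>0$, and then $[h_X^{(H)},\mathbf{1}_{n}]$ contains the nonsingular block $M(c)$ among its columns, so $\varphi(w_{(1:H)})\neq0$.

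\emph{Expected main obstacle.} Stage~1 is routine, the only point being that injectivity of softplus lets one carry distinctness of the inputs through all the hidden layers. The real work is in Stage~2: distinctness of the $t_i$ by itself does not make $\big[\sigma(c t_i+b_j)\big]$ nonsingular for an arbitrary activation at an arbitrary scale, and the device that rescues the argument is the small-$c$ expansion, which factors the matrix into a Vandermonde part (nonsingular because the $t_i$ are distinct) and a Wronskian-type part built from the derivatives of softplus (nonsingular because softplus solves no low-order constant-coefficient linear ODE). Carrying out the determinant expansion carefully --- checking that its lowest-order coefficient is exactly $\pm\det V\det\tilde B$ and is not perturbed by the higher-order Taylor terms --- is the step that requires genuine care.
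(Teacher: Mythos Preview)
Your argument is correct and takes a genuinely different route from the paper's. The paper exploits the specific inequality $\|x_i\|^2-x_i^\top x_j>0$ of Assumption~\ref{a:input} together with the large-argument behavior of softplus: in the narrow case it sets layers $1,\dots,H-1$ to approximate identities on the first $m_x$ coordinates (using $\sigma(z)\approx z$ for large $z$), then takes the $j$-th column of $W^{(H)}$ proportional to $x_j$ with a tuned bias so that the resulting $n\times n$ block is strictly diagonally dominant; the wide case applies the same diagonal-dominance device already at layer~$1$. You instead use only distinctness of the $x_i$ (strictly weaker than Assumption~\ref{a:input}) and a small-$c$ Taylor expansion, factoring the leading coefficient of $\det M(c)$ into a Vandermonde determinant (nonzero because the $t_i$ are distinct) times a Wronskian-type determinant (nonzero because softplus satisfies no constant-coefficient linear ODE). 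Your approach is more general --- it applies to any real-analytic activation with linearly independent derivatives, and Stage~1 in fact only needs $m_l\ge 1$, so you implicitly prove more than the theorem asks --- whereas the paper's construction is fully explicit and avoids both genericity arguments and the ODE fact about softplus.
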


In Theorem  \ref{thm:fcnet}, the case of $\min(m_1,\dots,m_{H-1})=m_x$ is allowed.
That is, all of the $1,2,\dots,H-1$-th hidden layers are allowed to be narrow (instead of wide) with the number of neurons to be   $m_{x}$, which is typically smaller than $n$. A previous paper recently proved that   gradient descent  finds a global minimum in a lazy training regime (i.e., the regime where NTK approximately remains unchanged  during training) with $d = \tilde \Omega( m_{y}n + m_x H^2+H^{5})$ \cite{kawaguchi2019gradient}. In contrast, Theorem  \ref{thm:fcnet} only requires $d \ge (m_y+m_{x})n+m_x^2H$ and   allows NTK to change significantly during training.

Assumption \ref{a:input} used in Theorem  \ref{thm:fcnet} can be easily satisfied, for example, by normalizing the input features for $x_{1},\dots,x_n$ so that $\|x_{i}\|^{2}=\|x_{j}\|^2$. With the normalization,  the condition is satisfied as long as $\|x_{i}-x_{j}\|^2>0$ for $i\neq j$   since  $\frac{1}{2}\|x_{i}-x_{j}\|^2=\|x_{i}\|^{2}-x_{i}^{\top} x_{j}$. 
In general, normalization is not necessary, for example, orthogonality on   $x_{i}$ and $x_{j}$ along with $x_{i} \neq 0$ satisfies the condition.

\subsection{Global convergence with lazy training in the second phase }
In the previous section, we did not assume the rank preservation condition. Instead, we considered the special  learning rate  $\eta_t$ in the second phase to  keep submatrices of the kernel matrix  $\mathbf{K}(w)$ unchanged during the second phase ($t \ge \tau$).
In this section, we show that algorithm \ref{al:train} can still ensure the global convergence with a standard uniform learning rate $\eta _{t}= \frac{2 \bar \eta}{L}\mathbf{1}_d$, as long as  the rank preservation condition is satisfied.

\begin{theorem} \label{thm:all}
Let $\eta _{t}= \frac{2 \bar \eta}{L}\mathbf{1}_d$ with $\bar \eta \in \RR$ for $t \ge \tau$. Suppose that $H\ge2$ and  the following three assumptions  hold:
\begin{itemize}[leftmargin=0.7cm]
\item
    Assumption \ref{a:existance_of_w} (expressivity condition)
    \item Assumption \ref{a:loss}, along with $\|\nabla\Lcal(w)-\nabla\Lcal(w')\|\le L \|w-w'\|$ for all $w,w'$ in the domain of $\Lcal$.
    \item 
    (rank preserving condition) $\rank \left(\mathbf{K}(w^k)\right)\ge\rank \left(\mathbf{K}(w^\tau)\right) $  for for all $k \in \{\tau+1, \tau+2,\dots, t\}$.
\end{itemize}
Then, the following  statement hold for any $t>\tau$: 
$$
\Lcal(w^{t^*}) - \Lcal(w^{*} )\le   \frac{1}{\sqrt{(t-\tau)+1}}    \sqrt{\frac{L \bar R^{2}(\Lcal(w^{\tau})- \Lcal(w^{*} ) )}{2 \bar \eta(1- \bar \eta)}}.
$$ 
where $t^*\in \argmin_{k \in \{\tau,\tau+1,\dots, t\}}\Lcal(w^{k})$ and 
$
\bar R = \max_{\tau\le k \le t} \min_{\hat \omega^{k}\in \bar Q_{k} }\|(\nu \odot w^{k} )- \hat \omega^{k}\|
$ 
with 
$
\bar Q_k = \argmin_{\hat \omega \in \RR^d} \frac{1}{n} \sum_{i=1}^n \ell\left(\sum_{j=1}^{d} \hat \omega_j \frac{\partial f(x_i,w^{k})\T}{\partial w_j},y_{i} \right).
$
\end{theorem}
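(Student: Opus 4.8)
The plan is to pair a standard descent-lemma argument for gradient descent on the $L$-smooth loss $\Lcal$ with a ``gradient domination'' estimate $\Lcal(w^{k})-\Lcal(w^{*})\le \bar R\,\|\nabla\Lcal(w^{k})\|$ that is manufactured, for every second-phase iterate $w^{k}$ ($k\ge\tau$), out of the expressivity and rank-preserving conditions via an NTK linearization. Throughout, the second-phase update is full-batch gradient descent, $w^{t+1}=w^{t}-\frac{2\bar\eta}{L}\nabla\Lcal(w^{t})$, and one uses $\bar\eta\in(0,1)$ so that $2\bar\eta(1-\bar\eta)>0$.

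\emph{Step 1 (descent).} From the assumed global Lipschitzness of $\nabla\Lcal$, the descent lemma gives $\Lcal(w^{k+1})\le \Lcal(w^{k})-\frac{2\bar\eta(1-\bar\eta)}{L}\|\nabla\Lcal(w^{k})\|^{2}$ for all $k\ge\tau$. Telescoping from $\tau$ to $t$ and using that $w^{*}$ is a global minimum of $\Lcal$ yields $\sum_{k=\tau}^{t}\|\nabla\Lcal(w^{k})\|^{2}\le \frac{L(\Lcal(w^{\tau})-\Lcal(w^{*}))}{2\bar\eta(1-\bar\eta)}$, hence $\min_{\tau\le k\le t}\|\nabla\Lcal(w^{k})\|^{2}\le \frac{L(\Lcal(w^{\tau})-\Lcal(w^{*}))}{2\bar\eta(1-\bar\eta)((t-\tau)+1)}$.

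\emph{Step 2 (gradient domination).} Fix $k\in\{\tau,\dots,t\}$ and consider the surrogate $\Phi_{k}(\hat\omega)=\frac1n\sum_{i=1}^{n}\ell(\sum_{j=1}^{d}\hat\omega_{j}\frac{\partial f(x_i,w^{k})\T}{\partial w_j},\,y_{i})$, which is convex in $\hat\omega$ since each $\ell_i$ is convex and $\hat\omega\mapsto\sum_j\hat\omega_j\frac{\partial f(x_i,w^{k})\T}{\partial w_j}$ is linear. Because $f_X$ is linear (degree-one homogeneous) in the last-layer parameters $w_{(H+1)}=(W^{(H+1)},b^{(H+1)})$, Euler's identity gives $\sum_{j}(\nu\odot w^{k})_{j}\frac{\partial f(x_i,w^{k})\T}{\partial w_j}=f(x_i,w^{k})\T$ for every $i$ (the hidden-layer coordinates, zeroed by $\nu$, drop out); therefore $\Phi_{k}(\nu\odot w^{k})=\Lcal(w^{k})$ and, by the chain rule, $\nabla\Phi_{k}(\nu\odot w^{k})=\nabla\Lcal(w^{k})$. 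Next, the expressivity condition together with real analyticity of $h^{(H)}_X$ forces $\varphi(w^{\tau}_{(1:H)})\neq0$ with probability one over $\delta$ (the zero set of a nonzero real-analytic function is Lebesgue-null, exactly as in Theorem \ref{thm:general}), so $\rank(\mathbf{K}(w^{\tau}))=nm_y$; the rank-preserving condition then yields $\rank(\mathbf{K}(w^{k}))=nm_y$, i.e.\ $\frac{\partial\vect(f_X(w^{k})\T)}{\partial w}$ has full row rank and range all of $\RR^{nm_y}$. Hence there is $\hat\omega^{\star}$ with $\sum_{j}\hat\omega^{\star}_{j}\frac{\partial f(x_i,w^{k})\T}{\partial w_j}=f(x_i,w^{*})\T$ for all $i$, so $\Phi_k$ attains its minimum with $\min_{\hat\omega}\Phi_{k}(\hat\omega)\le\Phi_{k}(\hat\omega^{\star})=\Lcal(w^{*})$ and $\bar Q_k\neq\emptyset$. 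For any $\hat\omega^{k}\in\bar Q_{k}$, convexity of $\Phi_k$ gives $\Lcal(w^{k})-\Lcal(w^{*})\le\Phi_{k}(\nu\odot w^{k})-\Phi_{k}(\hat\omega^{k})\le\langle\nabla\Lcal(w^{k}),\,(\nu\odot w^{k})-\hat\omega^{k}\rangle\le\|\nabla\Lcal(w^{k})\|\,\|(\nu\odot w^{k})-\hat\omega^{k}\|$; minimizing over $\hat\omega^{k}\in\bar Q_{k}$ and taking the maximum over $k$ produces $\Lcal(w^{k})-\Lcal(w^{*})\le\bar R\,\|\nabla\Lcal(w^{k})\|$.

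\emph{Step 3 (combine), and the obstacle.} Letting $k^{\star}$ minimize $\|\nabla\Lcal(w^{k})\|$ over $\{\tau,\dots,t\}$, one gets $\Lcal(w^{t^{\star}})-\Lcal(w^{*})\le\Lcal(w^{k^{\star}})-\Lcal(w^{*})\le\bar R\,\|\nabla\Lcal(w^{k^{\star}})\|\le\bar R\sqrt{\frac{L(\Lcal(w^{\tau})-\Lcal(w^{*}))}{2\bar\eta(1-\bar\eta)((t-\tau)+1)}}$, which rearranges to the stated bound. The main obstacle is Step 2, specifically the passage from the rank-preserving hypothesis to the claim that the linearized model at $w^{k}$ can reproduce the global-minimum predictions $f_X(w^{*})$: this needs the maximal-rank fact $\rank(\mathbf{K}(w^{\tau}))=nm_y$, which itself rests on the expressivity condition, real analyticity, and an almost-sure argument over $\delta$, after which rank preservation carries full rank to every $w^{k}$. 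The bookkeeping identities around $\nu\odot w^{k}$ (that the linear map sends it to exactly $f(x_i,w^{k})$, and that $\nabla\Phi_k$ there equals $\nabla\Lcal(w^{k})$) are the other place where linearity of the last layer must be used carefully.
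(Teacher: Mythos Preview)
Your proposal is correct and follows essentially the same route as the paper's proof: the paper packages your Step~1 as a separate descent lemma (telescoping the $L$-smoothness inequality to bound $\min_{k}\|\nabla\Lcal(w^{k})\|$) and your Step~2 as a ``gradient domination'' lemma built on the identity $f(x_i,w)^{\top}=\sum_j(\nu\odot w)_j\,\partial f(x_i,w)^{\top}/\partial w_j$ and convexity of $\ell_i$, then combines them exactly as in your Step~3 after using the expressivity/real-analyticity/rank-preservation chain to get $\rank(\mathbf{K}(w^{k}))=nm_y$. Your explicit observation that $\nabla\Phi_k(\nu\odot w^{k})=\nabla\Lcal(w^{k})$ is a clean way to phrase what the paper does slightly more implicitly inside its lemma.
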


 Theorem \ref{thm:all} shows that  using the lazy training that preserves the rank of the NTK matrix during the second phase can  ensure the  global convergence  for Algorithm \ref{al:train}. Therefore, the proposed two-phase training algorithm provides a new perspective for the lazy training regime. That is, NTK in Algorithm \ref{al:train}  is allowed to change significantly during the first phase training $t<\tau$ to learn the features or representation beyond the random features induced by the data-independent NTK at initialization. Our two-phase method allows the lazy training with the learned data-dependent NTK at time $\tau$, which is often a better representation of practical dataset than the NTK at random initialization.

The property of the lazy training now depends on the quantities at time $\tau$ instead of time $t=0$. For example, if we conduct the lazy training with over-parameterization, then the number of neurons required per layer depends on the residual error and the minimum eigenvalue of NTK\ at time $\tau$, instead of time $t=0$. This could potential lead to global convergence theory with weaker assumptions. Thus, the two-phase algorithm opens up a new direction of future research for applying the lazy training theory to the data-dependent NTK obtained at the end of first phase training. We can define the domain of $\Lcal$ to be a sublevel set around an initial point to satisfy the Lipschitz continuity.

\section{Proof idea and key challenges}
For global convergence for optimization of deep neural networks, recent results rely on different assumptions, such as over-parameterization and  initial conditions on gradient dynamics. Those different assumptions are essentially used in proofs to enforce the full rankness of the NTK matrix and the corresponding feature matrix during training. If the feature matrix is of full rank, then the global convergence is ensured (and the convergence rate depends on the minimum eigenvalue of the NTK matrix).

In order to apply our theory for practical settings, we want \textit{data-dependency} in two key aspects. First, we want the feature matrix to be data-dependent and to change significantly during training, in order to learn data-dependent features. In contrast, the various  assumptions in previous works essentially make the feature matrix to be approximately data-independent. Second, we want the global convergence results to hold data-dependently for a certain class of practical datasets. Instead, previous global convergence  results need to hold data-independently, or for linearly-separable datasets or  synthetic datasets generated by simple models (e.g.,  Gaussian mixtures). Because of these differences, there needs to be new proof strategies instead of adopting previous assumptions and their proof ideas.

\subsection{Proof for general networks}  
The first step in our proof is to show that the feature matrix is of full rank with probability one over random entries of the parameter vector. The global convergence  then follows from the full rankness (as shown in the complete proof in Appendix).

A challenge in the proof for the general case is to make the right assumption as discussed above. If we assume significant over-parameterization, then proving the full rankness is relatively easy, but it limits the applicability. Indeed, we want to allow deep networks to have narrow layers. 

Although the entries of the parameter vector after random perturbation have independent components, the entries of the feature matrix are dependent on each other. Indeed, the entries of the feature matrix are the outputs of nonlinear and non-convex functions of the entries of the parameter vector. Therefore, we cannot use elementary facts from linear algebra and random matrix theory with i.i.d. entries to prove the full rankness of the feature matrix.

Instead, we take advantage of the fact on the zero set of a real analytic function: if a function is real analytic and not identically zero, then the Lebesgue measure of its zero set is zero \citep{mityagin2015zero}. To utilize this fact, we define a function $\varphi(w_{(1:H)})=\det(h_{X}^{(H)}(w_{(1:H)})h_{X}^{(H)}(w_{(1:H)})^{\top})$. We then observe that $\varphi$ is  real analytic  since $h_{X}^{(H)}$ is assumed to be real analytic and a composition of real analytic functions is real analytic.
Furthermore, since the rank of $h_{X}^{(H)}(w_{(1:H)})$ and the rank of the Gram matrix are equal, we have that $\{w_{(1:H)}\in \RR^{d_{1:H}}: \rank(h_{X}^{(H)}(w_{(1:H)})) \neq n \}
=\{w_{(1:H)}\in \RR^{d_{1:H}}:\varphi(w_{(1:H)})=0\}.
$
Since $\varphi$ is analytic, if $\varphi$ is not identically zero ($\varphi\neq 0$),  the Lebesgue measure of its zero set 
$
\{w_{(1:H)}\in \RR^{d_{1:H}}:\varphi(w_{(1:H)})=0\}
$
is zero. 
Therefore, if $\varphi(w_{(1:H)})\neq 0$ for some $w_{(1:H)}\in \RR^{d_{1:H}}$, the  Lebesgue measure of the set 
$$
\{w_{(1:H)}\in \RR^{d_{1:H}}: \text{$\rank(h_{X}^{(H)}(w_{(1:H)})) \neq n$} \}
$$
is zero. Then, from the full rankness of the feature (sub)matrix $h_{X}^{(H)}(w_{(1:H)})$, we can ensure the global convergence, as in the previous papers with over-parameterization and neural tangent kernel.

Based on the above proof idea, as long as there exists  a $w_{(1:H)}\in \RR^{d_{1:H}}$ such that $\varphi(w_{(1:H)})\neq 0$, then we can conclude the desired global convergence. The key observation is that this condition is a time-independent and easily verifiable  condition in practice. This condition is defined as assumption \ref{a:existance_of_w}. We verify that assumption \ref{a:existance_of_w} holds in experiments numerically for deep ResNets and in theory for fully-connected networks. We complete the proof in Appendix.

\subsection{Proof for fully-connected networks}  
Without our result on the general case, a challenge to prove the global convergence of fully-connected networks lies in the task of dealing with narrow hidden layers; i.e., in the case of $\min(m_1,\dots,m_{H-1})= m_x < n$. In the case of $\min(m_1,\dots,m_{H-1}) \ge n$, it is easy to see that $k$-th layer with $m_k \ge n$ can preserve  rank $n$ for the corresponding matrices. In the case of $\min(m_1,\dots,m_{H-1})= m_x<n$, however, it cannot preserve rank $n$, and deriving the global convergence is non-trivial. 

With our result for the general case, however, our only remaining task is to show the existence of  a $w_{(1:H)}\in \RR^{d_{1:H}}$ such that $\varphi(w_{(1:H)})\neq 0$. Accordingly, we complete the proof in appendix by constructing such a $w_{(1:H)}\in \RR^{d_{1:H}}$ for the fully-connected networks with narrow layers.

\begin{figure}[b!]
\centering
\begin{subfigure}[b]{0.49\columnwidth}
  \includegraphics[width=1.0\columnwidth,height=0.6\columnwidth]{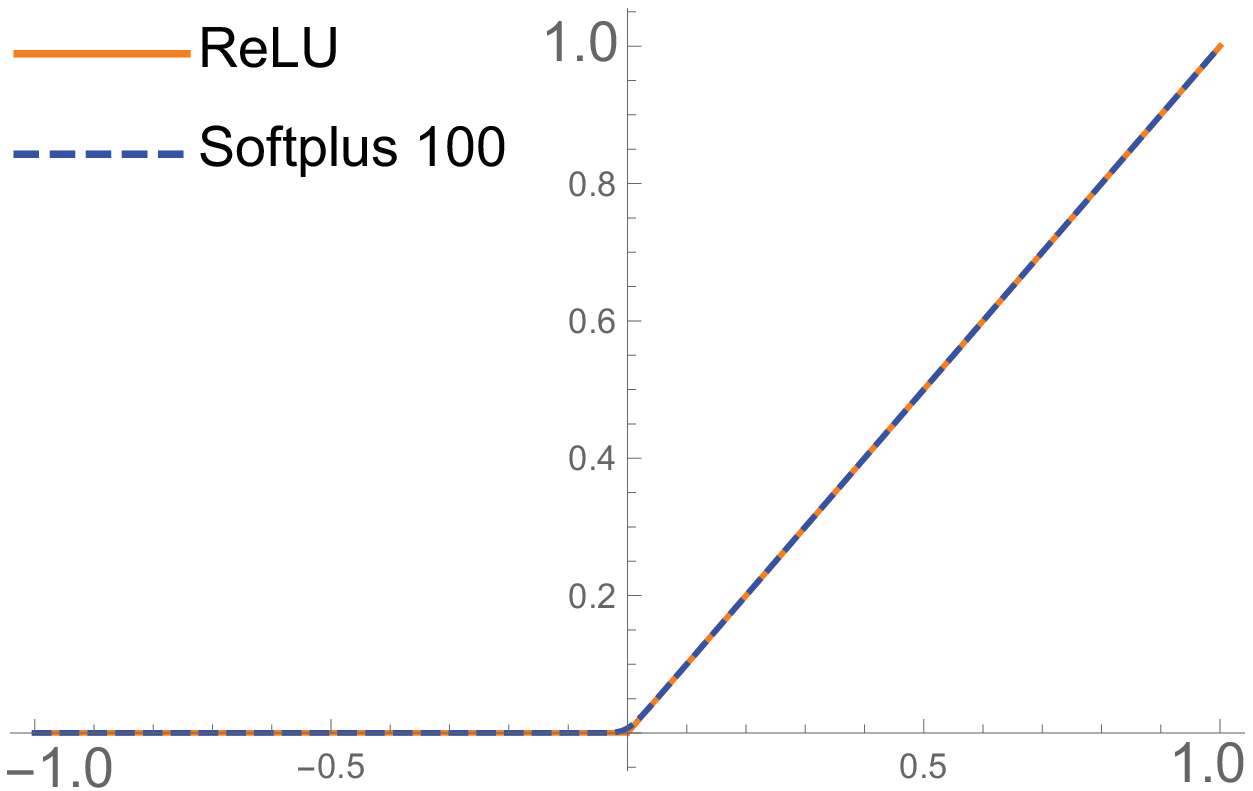}
\end{subfigure}
\begin{subfigure}[b]{0.49\columnwidth}
  \includegraphics[width=1.0\columnwidth,height=0.6\columnwidth]{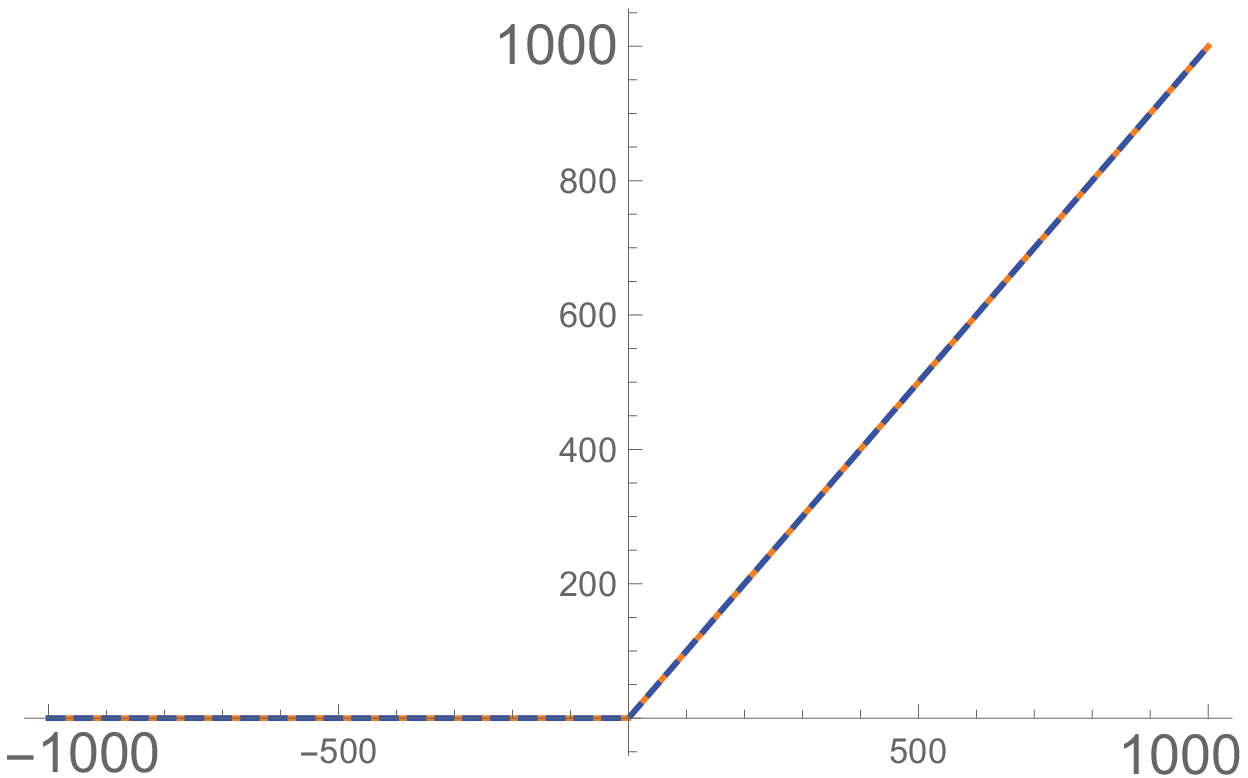}
\end{subfigure}
\captionof{figure}{ReLU   versus Softplus  with $\varsigma=100$.  
} 
\label{fig:softplus_vs_relu}
\end{figure}

\begin{table*}[t!]
\centering \renewcommand{\arraystretch}{0.1} 
\caption{Test errors (\%) of base and $\Acal(\text{base})$ with guarantee where the operator $\Acal$ maps any given first-order training algorithm to the two-phase version of the given algorithm with theoretical guarantees.  The numbers indicate the mean test errors (and standard deviations in parentheses) over five random trials. The  column of `Augmentation' shows `No' for  no  data augmentation, and `Yes' for data augmentation. The expressivity condition (assumption \ref{a:existance_of_w}) was numerically verified to all datasets.} \label{tbl:test_error} 
\begin{tabular}{lccccc}
\toprule
Dataset & \# of training data & Expressivity Condition& Augmentation &  Base & {\fontsize{8.5}{8.5}\selectfont{$\Acal(\text{base})$ with guarantee}}   \\
\midrule
MNIST & 60000 & Verified & No  & 0.41 (0.02)  & 0.38 (0.04) \vspace{4pt}\\

 &  & & Yes  &  0.34 (0.03) & 0.28 (0.04)  \\
\midrule
CIFAR-10 & 50000 & Verified & No   & 13.99 (0.17) & 13.57 (0.32)\vspace{4pt}\\

 &    &  & Yes  & 7.01 (0.18) & 6.84 (0.16)\\
\midrule
CIFAR-100 & 50000 & Verified &   No   & 41.43 (0.43) & 40.78 (0.22) \vspace{4pt}\\

 & & & Yes & 27.92 (0.36) & 27.41 (0.58) \\
\midrule
SVHN & 73257 & Verified &  No & 4.51 (0.04) & 4.50 (0.09) \vspace{4pt}\\
 &  &  & Yes  & 4.32 (0.06) & 4.16 (0.16) \\
\bottomrule
\end{tabular} 
\end{table*}

\section{Experiments}
In this section, we study the empirical aspect of our method. The network model we work with is the standard (convolutional) pre-activation ResNet with $18$ layers \citep{he2016identity}. To satisfy all the assumptions in our theory, we added a fully-connected last hidden layer  of  $cn$ neurons with a small constat $c = 1.1$ and set the nonlinear activation functions of all layers to be softplus $\sigma(z)=\ln(1+\exp(\varsigma z))/\varsigma$ with $\varsigma=100$, as a real analytic function that approximates the ReLU activation. This approximation is of high accuracy  as shown in Figure \ref{fig:softplus_vs_relu}. As discussed above, Proposition \ref{prop:analytic_bn} implies that the function $h^{(H)}_X$ for the ResNet  is real analytic. The loss function we work with is cross-entropy loss, which satisfies assumption \ref{a:loss}. Therefore, the only assumption in theorem \ref{thm:general} that is left to verify is assumption \ref{a:existance_of_w}. In the following subsection, we numerically verify assumption \ref{a:existance_of_w}.

\subsection{Verification of expressivity condition}
Assumption \ref{a:existance_of_w} assumes that the network satisfy expressivity condition, which only requires an existence of a  $w_{(1:H)}$ such that $\varphi(w_{(1:H)})\neq 0$. Here, $\varphi(w_{(1:H)})\neq 0$ is  implied by $\rank([h_{X}^{(H)}(w_{(1:H)}), \mathbf{1}_{n}])= n$. In other words, if we find one $w_{(1:H)}$ with $\rank([h_{X}^{(H)}(w_{(1:H)}), \mathbf{1}_{n}])= n$, then assumption \ref{a:existance_of_w}
is ensured to hold true. 
A simple way to find such a  $w_{(1:H)}$ is to randomly sample a single $w_{(1:H)}$ and check the condition of $\rank([h_{X}^{(H)}(w_{(1:H)}), \mathbf{1}_{n}])= n$.   

Table \ref{tbl:test_error} column $3$  summarizes the results of the verification of assumption \ref{a:existance_of_w} for various datasets. Here, we used a randomly sampled $w_{(1:H)}$ returned from  the default initialization of the ResNet with version $1.4.0.$ of PyTorch \cite{NEURIPS2019_9015}  by setting  random seed to be $1$. This initialization is based on the implementation of \cite{he2015delving}.
The condition of $\rank([h_{X}^{(H)}(w_{(1:H)}), \mathbf{1}_{n}])= n$ was checked by using   \textit{numpy.linalg.matrix\_rank} in NumPy version $1.18.1$ with  the default option (i.e., without any arguments except the matrix $[h_{X}^{(H)}(w_{(1:H)}), \mathbf{1}_{n}]$), which uses the standard method from \citep{press2007numerical}.

\begin{table}[t!]
\fontsize{8pt}{8pt}\selectfont
\centering 
\renewcommand{\arraystretch}{1.5} 
\caption{Test errors (\%) of   $\Acal(\text{base})$  with guarantee for  Kuzushiji-MNIST with different hyperparameters  $\tau = \tau_0 T$ and $\delta= \delta_0 \epsilon$. The numbers indicate the mean test errors (and standard deviations in parentheses) over three random trials. The expressivity condition (Assumption \ref{a:existance_of_w}) was numerically verified to hold for Kuzushiji-MNIST as well. } \label{tbl:hyper_param_search} 
\begin{tabular}{|l||*{4}{c|}}
\hline
\backslashbox{$\delta_0$}{$\tau_0$}
&\makebox[3em]{0.4} 
&\makebox[3em]{0.5}
&\makebox[3em]{0.6}
&\makebox[3em]{0.8}
\\\hline\hline
0.0001 & 2.10 (0.14) 
& 
2.06 (0.07)
& 
2.02 (0.09) & 2.01 (0.05)
\\\hline
0.001 & 2.06 (0.07) 
&
2.11 (0.12)
& \uline{1.92} (0.06) & 2.01 (0.12) \\\hline
0.01  & 2.25 (0.11)
& 2.25 (0.17)
& 2.25 (0.11) & 2.09 (0.08) \\\hline
\end{tabular}
\end{table}

\subsection{Performance}
In the following, we compare the generalization performances of the  two-phase training algorithm over different hyper-parameters' choices and with the baseline algorithm.

\subsubsection{Experimental setting}
We fixed all hyper-parameters of the base algorithm a priori across all different datasets by using a standard hyper-parameter setting of  SGD (following the setting of \citealp{kawaguchi2020ordered}), instead of aiming for  state-of-the-art test errors with a possible issue of over-fitting to  test and validation datasets \citep{dwork2015reusable,rao2008dangers}. Concretely, we fixed the mini-batch size to be 64, the weight decay rate to be $10^{-5}$, the momentum coefficient to be $0.9$, the first phase learning rate to be   $\eta_{t}=0.01$ and  the second phase learning rate to be $\eta_{t}=0.01 \times [\mathbf{0}_{d_{1:H}}\T, \mathbf{1}_{d_{H+1}}\T]\T$ to only train the last layer. The last epoch $T$ was fixed a priori as $T=100$ without data augmentation and $T=400$ with data augmentation.

\subsubsection{Choice of $\tau$ and $\delta$}
Now we discuss the choice of hyper-parameters for the time of transition $\tau$ and for the size of the noise $\delta$. Instead of potentially overfitting hyper-parameters to each dataset, we used a different dataset, Kuzushiji-MNIST \cite{clanuwat2019deep}, to fix all the hyper-parameters of Algorithm \ref{al:train} across all other datasets. That is,  we used Kuzushiji-MNIST with  different hyperparameters' values $\tau_0=0.4, 0.5, 0.6, 0.8$ and $\delta_0=0.0001, 0.001, 0.01$, where   $\tau = \tau_0 T$ and $\delta= \delta_0 \epsilon$. Here, $\epsilon\sim \Ncal(\mathbf 0,I_{d})$ where $I_{d}$ is the $d \times d$ identity matrix. Based on  the results from Kuzushiji-MNIST in Table \ref{tbl:hyper_param_search}, we fixed $\tau_0=0.6$ and $\delta_0=0.001$ for all datasets. 

\subsubsection{Generalization and optimization  comparison} We now compare the performances between the  base algorithm and its two-phase modified version. For generalization aspect, as shown in the last three columns of table~\ref{tbl:test_error}, the modified algorithm improved the test errors consistently over the four datasets with and without data augmentation. This suggests that the modified version of the base algorithm is competitive with the base algorithm for generalization performance. 

For optimization aspect, figure \ref{fig:train_loss} shows that the two-phase training algorithm indeed improves training loss values of the base algorithm in the second phase without changing any hyper-parameters (e.g., learning rate and momentum) of the base algorithm, as expected from our theory.  These results suggest that the two-phase algorithm can provide global convergence guarantees to a given base algorithm without hurting the generalization performance.

\begin{figure}[t!]
\centering
\begin{subfigure}[b]{0.495\columnwidth}\centering
  \includegraphics[width=\columnwidth,height=0.7\columnwidth]{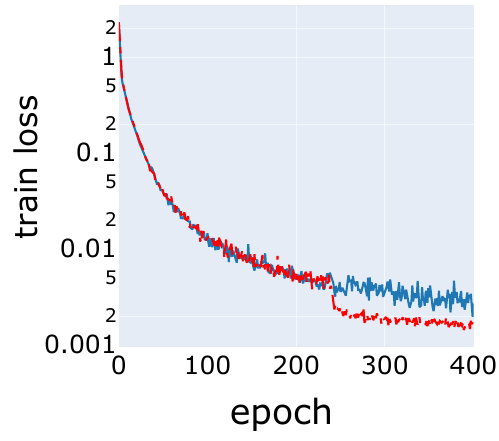}
  \caption{CIFAR-10} 
\end{subfigure} 
\begin{subfigure}[b]{0.495\columnwidth}\centering
  \includegraphics[width=\columnwidth,height=0.7\columnwidth]{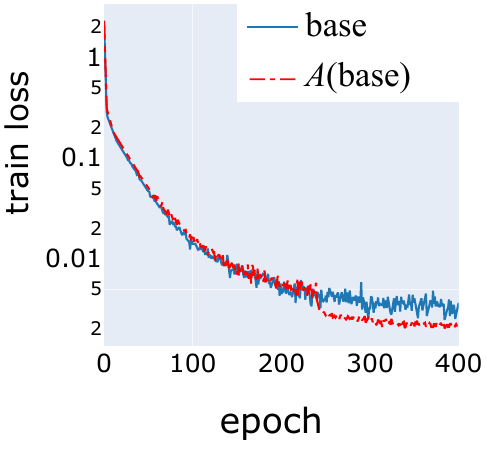}
  \caption{SVHN} 
\end{subfigure}  
\caption{Training loss of base and $\Acal(\text{base})$ with guarantee: the plots show the mean values over five random trials.}
\label{fig:train_loss} 
\end{figure}

\section{Conclusion}
In this paper, we proposed a two-phase method that modifies any given first-order optimization  algorithm to have global convergence guarantees without degrading practical performances of the given algorithm. The  conditions for global convergence are mathematically proven to hold for fully-connected deep networks with wide last hidden layer (while all other layers are allowed to be narrow). The conditions are also numerically verified for deep ResNet with batch normalization under various standard classification datasets. 

The two-phase training method opens up  a new future research direction to study the  use of the novel NTK regime with  \textit{learned representation} from data unlike the standard NTK regime near random initialization. Extending our theoretical analysis on a larger class of NTK with learned representation for global convergence and for generalization performance would be an interesting future direction.   

As the global optimal parameters can often achieve near zero training loss, we could expect models trained by our modified algorithm to have the benefits from  \textit{terminal phase of training} \cite{papyan2020prevalence} potentially for better generalization performance, robustness, and interpretability. Verifying these benefits would be sensible directions for future work. An extension of our theory and algorithm to implicit deep learning \cite{bai2019deep,el2019implicit,kawaguchi2021on} would be another interesting future direction.

\section*{Acknowledgments}
This work is partially supported by the Center of Mathematical Sciences and Applications at Harvard University. The authors thank Yiqiao Zhong, Mengyuan Yan for discussion.

\bibliography{all}

\clearpage
\onecolumn
\appendix

\begin{center}
\textbf{\LARGE 
 Appendix
\vspace{5pt}}
\end{center}

For the pedagogical purpose, we first present a  proof of theorem \ref{thm:general} for fully-connected neural networks, which illustrates our key ideas in the proof of theorem \ref{thm:general} in a more concrete and  familiar manner. 
Then we prove theorem \ref{thm:general} for general networks under expressivity condition (assumption \ref{a:existance_of_w}), and then we come back to fully-connected neural networks and  prove theorem \ref{thm:fcnet}. After we proved the first two theorems for global convergence, we prove proposition \ref{prop:analytic_bn} to show that batch-normalization layer is real analytic. Then, building upon these proofs, we prove theorem \ref{thm:all}. 

For the reproducibility of the experimental results, we provided the code used in the experiments with instructions as another supplementary material.

\section{Proof of Theorem \ref{thm:general} for fully-connected neural networks}
We start by defining the matrix notations:   
$$
f_{X}(w)= \begin{bmatrix}
f(x_{1},w) \\
\vdots \\
f(x_{n},w) \\
\end{bmatrix}\in \RR^{n \times m_y}, 
$$
$$
h^{(l)}_X(w_{(1:l)}) = \begin{bmatrix}
h^{(l)}(x_{1},w_{(1:l)}) \\
\vdots \\
h^{(l)}(x_{n},w_{(1:l)}) \\
\end{bmatrix} \in \RR^{n \times m_l} , 
$$
  
$$ 
f_X(w) = [h_{X}^{(H)}(w), \mathbf{1}_{n}] \begin{bmatrix} W^{(H+1)} \\  b^{(H+1)} \end{bmatrix}  \in \RR^{n \times m_y},
$$
$$
h^{(l)}_X(w_{(1:l)}) = \sigma\left( [h_{X}^{(l-1)}(w_{(1:l-1)}), \mathbf{1}_{n}] \begin{bmatrix} W^{(l)} \\  b^{(l)} \end{bmatrix}  \right)\in \RR^{n \times m_l} ,
$$
where   $\mathbf{1}_{n}=[1,1,\dots,1]^{\top} \in \RR^{n}$ and $\sigma$ is applied element-wise (by  overloading  of the notation $\sigma$). Let $M_k$ be the $k$-th column of the matrix $M$; e.g., $f_{X}(w)_k\in \RR^{n}$ and $h^{(l)}_X(w_{(1:l)})_{k}\in \RR^{n}$ are the $k$-th column vectors of $f_{X}(w)$ and $h^{(l)}_X(w_{(1:l)})$, respectively, which can be written by

$$ 
f_X(w)_{k} = [h_{X}^{(H)}(w_{(1:H)}), \mathbf{1}_{n}] \begin{bmatrix} W^{(H+1)}_k \\  b^{(H+1)}_k \end{bmatrix}  \in \RR^{n},
$$
$$ 
h^{(l)}_X(w_{(1:l)})  _{k}= \sigma\left( [h_{X}^{(l-1)}(w_{(1:l-1)}), \mathbf{1}_{n}] \begin{bmatrix} W_{k}^{(l)} \\  b_{k}^{(l)} \end{bmatrix}  \right)\in \RR^{n}.
$$
Let $M_{1:n,1:n}\in \RR^{n \times n}$ be the fist $n \times n$ block matrix of the matrix $M$, $v_{1:n} \in \RR^n$ be the vector containing the first $n$ elements  of the vector $v$ in the same order, and $I_{n}$ be the  $n \times n$ identity matrix. Let $M \otimes M'$ be the Kronecker product of $M$ and $M'$.

The following  lemma is used to take care of  the common and challenging case  of $\min(m_1,\dots,m_{H-1})\ge  m_{x}$, whereas we consider the uncommon and easier case of $\min(m_1,\dots,m_{H-1})\ge  n$ later.

\begin{lemma} \label{lemma:rank_step2}
Suppose Assumption \ref{a:input} hold. Assume that  $H\ge2$,   $\min(m_1,\dots,m_{H-1})\ge m_x$, and $m_H \ge n$. Then, the Lebesgue measure of the  set 
$
 \{w_{(1:H)}\in \RR^{d_{1:H}}: \rank(h_{X}^{(H)}(w_{(1:H)})) \neq n \}
$
is zero.
\end{lemma}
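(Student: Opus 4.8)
The plan is to exhibit a single point $w_{(1:H)}\in\RR^{d_{1:H}}$ at which $\rank(h_X^{(H)}(w_{(1:H)}))=n$; by the real-analyticity of $h_X^{(H)}$ (equivalently of $\varphi(w_{(1:H)})=\det(h_X^{(H)}h_X^{(H)\top})$) and the zero-set fact of \citet{mityagin2015zero}, the set where the rank drops below $n$ then has Lebesgue measure zero, which is the claim. So the whole lemma reduces to a construction. First I would handle the ``bottleneck'' layers $1,\dots,H-1$, each of width $\ge m_x$. The idea is to choose the first hidden layer's affine map so that the $n\times m_x$ block $[h_X^{(0)},\mathbf 1_n]=[X\T ,\mathbf 1_n]$ is mapped injectively (as a function of the $n$ rows $x_i\T$) into $\RR^{1\times m_1}$: since Assumption~\ref{a:input} gives $\|x_i\|^2-x_i\T x_j>0$ for $i\neq j$, the inputs $x_1,\dots,x_n$ are pairwise distinct, and one can pick $W^{(1)}$ (e.g. essentially an embedding of $\RR^{m_x}$ into the first $m_x$ coordinates, padded with zeros) and $b^{(1)}$ so that $\sigma$ applied coordinatewise keeps the $n$ rows distinct — using that softplus is injective on $\RR$. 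Iterating, each narrow layer $l\le H-1$ can be chosen to keep the $n$ post-activation rows pairwise distinct, so that $h_X^{(H-1)}(w_{(1:H-1)})\in\RR^{n\times m_{H-1}}$ has $n$ distinct rows.

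Next, for the wide last hidden layer ($m_H\ge n$), I would use the $n$ distinct rows $v_1,\dots,v_n\in\RR^{1\times m_{H-1}}$ coming out of layer $H-1$ together with the bias column to build, via $W^{(H)},b^{(H)}$, an $n\times n$ submatrix of $h_X^{(H)}$ that is invertible. Concretely, because $v_1,\dots,v_n$ are distinct, I can choose a direction $a\in\RR^{m_{H-1}}$ with the $n$ scalars $t_i:=v_i a + b$ pairwise distinct, and then choose $n$ of the $m_H$ columns of $W^{(H)}$ to be scalar multiples $c_1 a,\dots,c_n a$ of $a$ with suitable biases, so that the corresponding $n\times n$ block of $h_X^{(H)}$ is $\big(\sigma(c_k t_i + \tilde b_k)\big)_{i,k}$. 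Invertibility of such a matrix follows from the fact that softplus is analytic and non-polynomial: by a standard Vandermonde-type / linear-independence-of-translates-and-dilates argument (or by differentiating $\sigma$ a suitable number of times, or invoking that $\{x\mapsto\sigma(cx)\}_c$ spans an infinite-dimensional space), one can pick $c_1,\dots,c_n$ and $\tilde b_1,\dots,\tilde b_n$ making the determinant nonzero. Hence $\rank(h_X^{(H)}(w_{(1:H)}))\ge n$, and since $h_X^{(H)}$ has $n$ rows this forces rank exactly $n$.

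Finally I would assemble: the constructed $w_{(1:H)}$ witnesses $\varphi(w_{(1:H)})\ne 0$; $\varphi$ is real analytic as a composition of the real-analytic $h_X^{(H)}$ with polynomial (determinant, product) operations; therefore $\{w_{(1:H)}:\varphi(w_{(1:H)})=0\}$ has Lebesgue measure zero, and this set coincides with $\{w_{(1:H)}:\rank(h_X^{(H)}(w_{(1:H)}))\ne n\}$ because rank of a matrix equals rank of its Gram matrix. That is exactly the assertion.

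I expect the main obstacle to be the propagation step through the narrow layers: one must verify that softplus composed with the chosen affine maps genuinely preserves the pairwise distinctness of the $n$ rows at every layer $1,\dots,H-1$ despite width only $m_x$, and then that the final wide-layer block can be made invertible. The narrow-layer part is where Assumption~\ref{a:input} is really used, and care is needed because distinctness of \emph{vectors} must be maintained, not merely of some coordinate; a clean way is to keep one coordinate carrying an injective ``address'' of each $x_i$ (e.g. a generic linear functional separating the $x_i$, which exists since they are distinct) all the way through, using injectivity of softplus on $\RR$. The wide-layer invertibility is then a finite-dimensional genericity statement about the non-polynomial analytic function softplus, provable either by an explicit determinant computation in a limiting regime or by a dimension-counting argument, and I would relegate the bookkeeping to the appendix.
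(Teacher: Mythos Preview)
Your overall framework matches the paper's: real analyticity of $\varphi(w_{(1:H)})=\det(h_X^{(H)}h_X^{(H)\top})$, the zero-set lemma of \citet{mityagin2015zero}, and reduction to constructing a single witness $w_{(1:H)}$ with $\rank(h_X^{(H)})=n$. The construction, however, is different. The paper makes the narrow layers $1,\dots,H-1$ act as a \emph{near-identity} on the first $m_x$ coordinates: with $W^{(l)}_{1:m_x,1:m_x}=I_{m_x}$ and a large bias shift $\alpha\mathbf 1_{m_x}\T$ at layer $1$, one obtains $h^{(H-1)}(x_i)_{1:m_x}=x_i\T+\alpha\mathbf 1_{m_x}\T+\psi_i$ with $\psi_i\to 0$ as $\alpha\to\infty$, using the identity $\sigma(z)=z+\ln(1+e^{-\varsigma z})/\varsigma$. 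At the wide layer the paper then sets $W^{(H)}_j=\alpha' x_j$ with a matching bias so that the $(i,j)$-entry of the first $n\times n$ block equals $\sigma(\alpha'(c/2+o(1)))$ on the diagonal and is at most $\sigma(\alpha'(-c/2+o(1)))$ off it, where $c>0$ is the uniform gap extracted from Assumption~\ref{a:input}; strict diagonal dominance then gives invertibility immediately. So the paper exploits the \emph{quantitative} inner-product content of Assumption~\ref{a:input} at the wide layer, not at the narrow ones. Your route instead extracts from Assumption~\ref{a:input} only that the $x_i$ are pairwise distinct, propagates bare row-distinctness through the narrow layers via injectivity of softplus, and at the wide layer appeals to a limiting/genericity argument (e.g.\ thresholds $s_k$ interleaving the sorted $t_i$ make $(\sigma(c(t_i-s_k)))_{i,k}$ nearly triangular for large $c$). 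This is correct and would in fact establish the lemma under the weaker hypothesis that the $x_i$ are merely distinct, but it is less explicit and leaves the wide-layer invertibility as a limiting computation to be written out; the paper's diagonal-dominance construction is fully concrete and self-contained.
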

\begin{proof} Define $\varphi(w_{(1:H)})=\det(h_{X}^{(H)}(w_{(1:H)})h_{X}^{(H)}(w_{(1:H)})^{\top})$, which is   analytic  since $\sigma$ is analytic.
Furthermore, we have that $\{w_{(1:H)}\in \RR^{d_{1:H}}: \rank(h_{X}^{(H)}(w_{(1:H)})) \neq n \}
=\{w_{(1:H)}\in \RR^{d_{1:H}}:\varphi(w_{(1:H)})=0\},
$
since the rank of $h_{X}^{(H)}(w_{(1:H)})$ and the rank of the Gram matrix are equal.
Since $\varphi$ is analytic, if $\varphi$ is not identically zero ($\varphi\neq 0$),  the Lebesgue measure of its zero set 
$
\{w_{(1:H)}\in \RR^{d_{1:H}}:\varphi(w_{(1:H)})=0\}
$
is zero \citep{mityagin2015zero}. 
Therefore, if $\varphi(w_{(1:H)})\neq 0$ for some $w_{(1:H)}\in \RR^{d_{1:H}}$, the  Lebesgue measure of the set $\{w_{(1:H)}\in \RR^{d_{1:H}}: \text{$\rank(h_{X}^{(H)}(w_{(1:H)})) \neq n$} \}$ is zero. 

Accordingly, we now constructs a $w_{(1:H)}\in \RR^{d_{1:H}}$such that $\varphi(w_{(1:H)})\neq 0$. Set $W^{(1)}_{1:m_x,1:m_x}=\cdots=W^{(H-1)}_{1:m_x,1:m_x}=I_{1:m_x}$,  $b^{(1)}_{1:m_x}=\alpha \mathbf{1}_{m_x}\T$, and all others to be zero. Let $\mathbf{h}(x)=h^{(H-1)}(x,w_{(1:H-1)})_{1,1:m_x}$. Then, 
\begin{align*}
\mathbf{h}(x)=\sigma^{\circ H-1}(x\T +\alpha\mathbf{1}_{m_x}\T), 
\end{align*}
where $\sigma^{\circ l}(z)=\sigma(\sigma^{\circ l-1}(z))$ for $l \ge 1$ with         $\sigma^{\circ0}(z)=z$.
Since\begin{align*}
\varsigma\sigma(z)= \ln(1 + e^{\varsigma z})= \ln((e^{-\varsigma z}+1)e^{\varsigma z})= \ln((e^{-\varsigma z}+1)e^{\varsigma z})=\ln(e^{\varsigma z})+ \ln(e^{-\varsigma z}+1)=\varsigma z+ \ln(1+e^{-\varsigma z}),
\end{align*}
we have 
$$\sigma(z)=z+ \ln(1+e^{-\varsigma z})/\varsigma,
$$ 
and 
\begin{align*}
\sigma^{\circ H-1}(z)&=\sigma^{\circ H-2}(z)+ \ln(1+e^{-\varsigma \sigma^{\circ H-2}(z)})/\varsigma
\\ & =z+ \sum_{l=0}^{H-2} \ln(1+e^{-\varsigma \sigma^{\circ l}(z)})/\varsigma.
\end{align*}
Thus, 
\begin{align*} 
\text{$\mathbf{h}(x)=\sigma^{\circ H-1}(x\T +\alpha\mathbf{1}_{m_x}\T) = x\T +\alpha\mathbf{1}_{m_x}\T + \psi( x\T +\alpha\mathbf{1}_{m_x}\T) $}.
\end{align*}
where $\psi(z)= \sum_{l=0}^{H-2} \ln(1+e^{-\varsigma \sigma^{\circ l}(z)})/\varsigma$. Here, $\psi(z) \rightarrow 0$ as $z\rightarrow \infty$, since 
$
\sigma^{\circ l}(z)=z+ \sum_{k=0}^{l-1} \ln(1+e^{-\varsigma \sigma^{\circ k}(z)})/\varsigma \ge z 
$
and $\ln(1+e^{-\varsigma \sigma^{\circ k}(z)}) \ge 0$.  

From Assumption \ref{a:input}, there exists $c>0$ such that $\|x_i\|_2^2- x_i\T x_j > c$ for any $x_i,x_j \in S_{x}$ with $i\neq j$. Using such a $c>0$ as well as a $\alpha'>0$,  set $W^{(H)}_i=\alpha'x_{i}\in \RR^{m_x}$ and $b^{(H)}_i=-\alpha'\alpha\mathbf{1}_{m_x}\T x_{i}^{}+\alpha'(c/2 - \|x_i \|^2)$ for $i=1,\dots,n$. 
Set all other weights and bias to be zero. Then,
for any $i\in\{1,\dots,n\}$,
\begin{align*}
h^{(H)}_X(w_{(1:H)}) _{ii} &=\sigma(\alpha'(x\T _{i}+\alpha\mathbf{1}_{m_x}\T + \psi( x\T _{i}+\alpha\mathbf{1}_{m_x}\T))x_{i} -\alpha'\alpha\mathbf{1}_{m_x}\T x_{i}^{}+\alpha'(c/2 - \|x_i \|^2)), 
\\ & =\sigma(\alpha'(c/2+\psi( x\T _{i}+\alpha\mathbf{1}_{m_x}\T)x_{i})),
\end{align*}
and for any $j\in\{1,\dots,n\}$ with  $j \neq i$,
\begin{align*}
h^{(H)}_X(w_{(1:H)})_{ij} &=\sigma(\alpha'(x\T _{i}+\alpha\mathbf{1}_{m_x}\T + \psi( x\T _{i}+\alpha\mathbf{1}_{m_x}\T))x_{j} -\alpha'\alpha\mathbf{1}_{m_x}\T x_{j}^{}+\alpha'(c/2 - \|x_j \|^2))
\\ & =\sigma(\alpha'(x\T _{i}x_{j}-\|x_j \|^2+c/2+\psi( x\T _{i}+\alpha\mathbf{1}_{m_x}\T)x_{j})) \\ & \le \sigma(\alpha'(-c/2+\psi( x\T _{i}+\alpha\mathbf{1}_{m_x}\T)x_{j})),   
\end{align*}  
where the last inequality used the monotonicity of $\sigma$ and $\|x_i\|_2^2- x_i\T x_j > c$.

Since $\sigma(\alpha'c/2)\rightarrow \infty$ and $\sigma(-\alpha'c/2) \rightarrow 0$  
as  $\alpha'\rightarrow \infty$ and  $\psi( x\T _{i}+\alpha\mathbf{1}_{m_x}\T) \rightarrow 0$ as $\alpha\rightarrow \infty$, we have that with  $\alpha,\alpha'$ sufficiently large, for any $i \in \{1,\dots,n\}$,
\begin{align*} 
\left|h^{(H)}_X(w_{(1:H)}) _{ii}   \right| > \sum_{k\neq i} \left|h^{(H)}_X(w_{(1:H)})_{ik}    \right|,
\end{align*}
which means that the matrix $[h^{(H)}_X(w_{(1:H)})_{ij} ]_{1\le i,j\le n} \in \RR^{n \times n}$
 is strictly diagonally dominant and hence is nonsingular with rank $n$.  Since the set of all columns of $h^{(H)}_X(w_{(1:H)})\in \RR^{n \times (m_H + 1)}$ contains all columns of $[h^{(H)}_X(w_{(1:H)})_{ij} ]_{1\le i,j\le n} \in \RR^{n \times n}$, this implies that $h^{(H)}_X(w_{(1:H)})$ has rank $n$ and $\varphi(w_{(1:H)})\neq 0$ for this constructed particular $w_{(1:H)}$. 
Since $\varphi(w_{(1:H)})$ is not identically zero,  the Lebesgue measure of the  set 
$
 \{w_{(1:H)}\in \RR^{d_{1:H}}:\varphi(w_{(1:H)})=0\}=\{w_{(1:H)}\in \RR^{d_{1:H}}: \rank(h_{X}^{(H)}(w_{(1:H)})) \neq n \}
$
is zero. 

\end{proof}

The following lemma is used to prove the global convergence for  the easier case of  $\min(m_1,\dots,m_{H-1})\ge  n$.
\begin{lemma} \label{lemma:rank_step1}
Suppose Assumption \ref{a:input} hold. Assume that  $H\ge1$ and $\min(m_1,\dots,m_H)\ge n$. Then, the Lebesgue measure of the  set 
$
 \{w_{(1:H)}\in \RR^{d_{1:H}}: \rank(h_{X}^{(H)}(w_{(1:H)})) \neq n \}
$
is zero.
\end{lemma}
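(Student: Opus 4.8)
The plan is to mirror the structure of the proof of Lemma \ref{lemma:rank_step2}. First I would reduce the measure-theoretic claim to the existence of a single good witness: define $\varphi(w_{(1:H)})=\det(h_{X}^{(H)}(w_{(1:H)})h_{X}^{(H)}(w_{(1:H)})\T)$, which is real analytic because $\sigma$ is real analytic and $h_{X}^{(H)}$ is a composition of affine maps and coordinatewise applications of $\sigma$. Since the rank of a matrix equals the rank of its Gram matrix, $\{w_{(1:H)}\in\RR^{d_{1:H}}:\rank(h_{X}^{(H)}(w_{(1:H)}))\neq n\}=\{w_{(1:H)}\in\RR^{d_{1:H}}:\varphi(w_{(1:H)})=0\}$, and by the zero-set fact for real analytic functions \citep{mityagin2015zero} it suffices to exhibit one $w_{(1:H)}$ with $\varphi(w_{(1:H)})\neq 0$, i.e. with $\rank(h_{X}^{(H)}(w_{(1:H)}))=n$.

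For the witness I would build the rank at the first layer and then preserve it through the remaining layers. Since $m_{1}\ge n$, I would reuse the strictly-diagonally-dominant construction from the proof of Lemma \ref{lemma:rank_step2}: using Assumption \ref{a:input} pick $c>0$ with $\|x_i\|^2-x_i\T x_j>c$ for $i\neq j$, set $W^{(1)}_{k}=\alpha' x_{k}$ and $b^{(1)}_{k}=\alpha'(c/2-\|x_k\|^{2})$ for $k=1,\dots,n$, with all remaining entries of $w_{(1:H)}$ set later. Then $h^{(1)}_{X}(w_{(1:1)})_{ik}=\sigma(\alpha'(x_i\T x_k-\|x_k\|^2+c/2))$, whose diagonal entries equal $\sigma(\alpha' c/2)\to\infty$ and whose off-diagonal entries are at most $\sigma(-\alpha' c/2)\to 0$ as $\alpha'\to\infty$ (by monotonicity of $\sigma$ and Assumption \ref{a:input}); hence for $\alpha'$ large the leading $n\times n$ block is strictly diagonally dominant, so $\rank(h^{(1)}_{X}(w_{(1:1)}))=n$. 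If $H=1$ this already finishes the construction.

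For $H\ge2$ I would propagate rank $n$ through layers $l=2,\dots,H$ by induction. Assuming $\rank(h^{(l-1)}_{X}(w_{(1:l-1)}))=n$, choose a set $J$ of $n$ columns of $h^{(l-1)}_{X}$ forming an invertible matrix $B\in\RR^{n\times n}$; in $W^{(l)}$ place $\gamma B^{-1}$ into the rows indexed by $J$ and the first $n$ output coordinates, set all other entries of $W^{(l)}$ to zero, and take $b^{(l)}=\beta\mathbf{1}_{m_l}\T$ with $\gamma,\beta>0$. Then the leading $n\times n$ block of the pre-activation equals $\gamma I_{n}+\beta\mathbf{1}_{n}\mathbf{1}_{n}\T$, so the corresponding block of $h^{(l)}_{X}$ is $(\sigma(\gamma+\beta)-\sigma(\beta))I_{n}+\sigma(\beta)\mathbf{1}_{n}\mathbf{1}_{n}\T$, which is invertible since $\sigma$ is strictly increasing (so $\sigma(\gamma+\beta)-\sigma(\beta)>0$) and strictly positive (so $\sigma(\gamma+\beta)+(n-1)\sigma(\beta)>0$); hence $\rank(h^{(l)}_{X}(w_{(1:l)}))=n$. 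By induction $\rank(h^{(H)}_{X}(w_{(1:H)}))=n$, so $\varphi(w_{(1:H)})\neq0$ for this $w_{(1:H)}$, and the measure-zero conclusion follows as in Lemma \ref{lemma:rank_step2}.

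The only delicate point is the inductive propagation: one must choose the weights of an intermediate layer so that the \emph{post}-nonlinearity feature matrix still has rank $n$, rather than invoking a generic-rank argument. This is handled by the explicit ``diagonal-plus-rank-one'' block above, using only that softplus is strictly increasing and strictly positive; everything else is routine bookkeeping parallel to Lemma \ref{lemma:rank_step2}.
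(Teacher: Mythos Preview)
Your proposal is correct. The reduction to a single witness via the real-analytic zero-set argument and the first-layer diagonally-dominant construction are essentially identical to the paper's proof. The propagation step for layers $l=2,\dots,H$ is genuinely different: the paper sets $W^{(l)}_{1:n,1:n}=\alpha^{(l)} I_{n}$ and $b^{(l)}_{1:n}=-\alpha^{(l)}\mathbf{1}_{n}$, then argues inductively that diagonal entries of $h^{(l)}_X$ satisfy $h^{(l)}_{ii}\ge\sigma(\alpha^{(l)} c^{(l)}_{1})\to\infty$ while off-diagonals satisfy $h^{(l)}_{ik}\le\sigma(-\alpha^{(l)} c^{(l)}_{2})\to 0$, so strict diagonal dominance is maintained for sequentially chosen large $\alpha^{(1)},\dots,\alpha^{(H)}$. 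Your construction instead places $\gamma B^{-1}$ in $W^{(l)}$ (with $B$ an invertible $n\times n$ submatrix of $h^{(l-1)}_X$), producing the exact post-activation block $(\sigma(\gamma+\beta)-\sigma(\beta))I_{n}+\sigma(\beta)\mathbf{1}_{n}\mathbf{1}_{n}\T$, whose invertibility follows for \emph{any} $\gamma,\beta>0$ from strict monotonicity and positivity of softplus. Your route avoids the asymptotic choice of a sequence of scales and gives a one-line eigenvalue check; the paper's route avoids inverting a data-dependent matrix and keeps all weights as scaled identities, which is slightly more elementary. Both are valid and yield the same conclusion.
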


\begin{proof}
Define $\varphi(w_{(1:H)})=\det([h_{X}^{(H)}(w_{(1:H)}), \mathbf{1}_{n}][h_{X}^{(H)}(w_{(1:H)}), \mathbf{1}_{n}]^{\top})$, which is   analytic  since $\sigma$ is analytic.
Furthermore, we have that $\{w_{(1:H)}\in \RR^{d_{1:H}}: \rank([h_{X}^{(H)}(w_{(1:H)}), \mathbf{1}_{n}]) \neq n \}
=\{w_{(1:H)}\in \RR^{d_{1:H}}:\varphi(w_{(1:H)})=0\},
$
since the rank of $[h_{X}^{(H)}(w_{(1:H)}), \mathbf{1}_{n}]$ and the rank of the Gram matrix are equal.
Since $\varphi$ is analytic, if $\varphi$ is not identically zero ($\varphi\neq 0$),  the Lebesgue measure of its zero set 
$
\{w_{(1:H)}\in \RR^{d_{1:H}}:\varphi(w_{(1:H)})=0\}
$
is zero \citep{mityagin2015zero}. 
Therefore, if   $\varphi(w_{(1:H)})\neq 0$ for some $w_{(1:H)}\in \RR^{d_{1:H}}$, the  Lebesgue measure of the set $\{w_{(1:H)}\in \RR^{d_{1:H}}: \text{$[h_{X}^{(H)}(w_{(1:H)}), \mathbf{1}_{n}]$ has rank less than ${n}$} \}$ is zero. 

Accordingly, we now constructs a $w_{(1:H)}\in \RR^{d_{1:H}}$such that $\varphi(w_{(1:H)})\neq 0$. 
From Assumption \ref{a:input}, there exists $c>0$ such that $\|x_i\|_2^2- x_i\T x_j > c$ for any $x_i,x_j \in S_{x}$ with $i\neq j$. Using such a $c$, set $W^{(1)}_i=\alpha^{(1)} x_i \in \RR^{m_x}$ and $b^{(1)}_i=\alpha^{(1)}(c/2 - \|x_{i}\|_2^2)$ for $i=1,\dots,n$. Moreover, set $W^{(l)}_{1:n,1:n}=\alpha^{(l)} I_{n}$ and $b^{(l)}_{1:n}=-\alpha^{(l)} \mathbf{1}_{n}$ for $l=2,\dots,H$.
Set all other weights and bias to be zero. 

Then,
for any $i\in\{1,\dots,n\}$,
$$
h^{(1)}_X(w_{(1:1)}) _{ii} = \sigma(c\alpha^{(1)}/2), 
$$
and for any $k\in\{1,\dots,n\}$ with  $k \neq i$,  
$$
h^{(1)}_X(w_{(1:1)})_{ik} =\sigma(\alpha^{(1)} (\langle x_i,x_k\rangle- \|x_k\|_2^{2}+c/2) ) \le\sigma(-c\alpha^{(1)}/2).  
$$
Since $\sigma(c\alpha^{(1)}/2)\rightarrow \infty$ and $\sigma(-c\alpha^{(1)}/2) \rightarrow 0$  
as  $\alpha^{(1)}\rightarrow \infty$, with  $\alpha^{(1)}$ sufficiently large, we have that $\sigma(c\alpha^{(1)}/2)-1\ge c^{(2)}_1$ and $\sigma(-c\alpha^{(1)}/2)-1 \le -c^{(2)}_2 $ for some $c^{(2)}_1,c^{(2)}_2>0$.
Therefore, with  $\alpha^{(1)}$ sufficiently large,
$$
h^{(2)}_X(w_{(1:2)}) _{ii} =\sigma(\alpha^{(2)} (\sigma(c\alpha^{(1)}/2)-1) ) \ge\sigma(\alpha^{(2)} c^{(2)}_1) ,
$$
and  
$$
h^{(2)}_X(w_{(1:2)})_{ik}  \le \sigma(\alpha^{(2)}(\sigma(-c\alpha^{(1)}/2)-1)  ) \le\sigma(-\alpha^{(2)} c^{(2)}_2)   . 
$$
Since $\sigma(\alpha^{(2)} c^{(2)}_1)\rightarrow \infty$ and $\sigma(-\alpha^{(2)} c^{(2)}_2) \rightarrow 0$  
as  $\alpha^{(2)}\rightarrow \infty$, with  $\alpha^{(2)}$ sufficiently large, we have that $\sigma(\alpha^{(2)} c^{(2)}_1)-1\ge c^{(3)}_1$ and $\sigma(-\alpha^{(2)} c^{(2)}_2)-1 \le -c^{(3)}_2 $ for some $c^{(3)}_1,c^{(3)}_2>0$. Note that given the $\alpha^{(1)}$, $c^{(3)}_1$ and $c^{(3)}_2$ depends only on $\alpha^{(2)}$ and does not depend on any of $\alpha^{(3)},\dots,\alpha^{(H)}$. Therefore, with  $\alpha^{(2)}$ sufficiently large,
$$
h^{(3)}_X(w_{(1:3)}) _{ii} \ge \sigma(\alpha^{(3)} (\sigma(\alpha^{(2)} c^{(2)}_1)-1) ) \ge\sigma(\alpha^{(3)} c^{(3)}_1) ,
$$
and  
$$
h^{(3)}_X(w_{(1:3)})_{ik}  \le \sigma(\alpha^{(3)}(\sigma(-\alpha^{(2)} c^{(2)}_2)-1)  ) \le\sigma(-\alpha^{(3)} c^{(3)}_2)   .
$$
Repeating this process for $l=1,\dots,H$, we have that with    $\alpha^{(1)},\dots,\alpha^{(H-1)}$ sufficiently large,
$$h^{(H)}_X(w_{(1:H)}) _{ii} \ge\sigma(\alpha^{(H)} c^{(H)}_1),$$ 
and 
$$
h^{(H)}_X(w_{(1:H)})_{ik} \le\sigma(-\alpha^{(H)} c^{(H)}_2),
$$
where $c^{(H)}_1=c^{(H)}_2=c/2$ if $H=1$.

Here,  $h^{(H)}_X(w_{(1:H)}) _{ii}\rightarrow \infty$ and $h^{(H)}_X(w_{(1:H)})_{ik} \rightarrow0$ as $\alpha^{(H)} \rightarrow \infty$.
Therefore, with    $\alpha^{(1)},\dots,\alpha^{(H)}$ sufficiently large,
for any $i \in \{1,\dots,n\}$,
\begin{align} \label{eq:lemma:rank_step1_1}
\left|h^{(H)}_X(w_{(1:H)}) _{ii}   \right| > \sum_{k\neq i} \left|h^{(H)}_X(w_{(1:H)})_{ik}    \right|.
\end{align}
The inequality \eqref{eq:lemma:rank_step1_1} means that  the matrix $[h^{(H)}_X(w_{(1:H)})_{ij} ]_{1\le i,j\le n} \in \RR^{n \times n}$
 is strictly diagonally dominant and hence is nonsingular with rank $n$.  Since the set of all columns of $h^{(H)}_X(w_{(1:H)})\in \RR^{n \times (m_H + 1)}$ contains all columns of $[h^{(H)}_X(w_{(1:H)})_{ij} ]_{1\le i,j\le n} \in \RR^{n \times n}$, this implies that $h^{(H)}_X(w_{(1:H)})$ has rank $n$ and $\varphi(w_{(1:H)})\neq 0$ for this constructed particular $w_{(1:H)}$. 
Since $\varphi(w_{(1:H)})$ is not identically zero,  the Lebesgue measure of the  set 
$
 \{w_{(1:H)}\in \RR^{d_{1:H}}:\varphi(w_{(1:H)})=0\}=\{w_{(1:H)}\in \RR^{d_{1:H}}: \rank(h_{X}^{(H)}(w_{(1:H)})) \neq n \}
$
is zero.

\end{proof}

Using the matrix notations, we obtain the following lemma: 
\begin{lemma} \label{lemma:f_manipulations}
For any $x$ and $w$, \begin{align*} 
f(x,w)\T = \left[I_{m_{y}} \otimes\begin{bmatrix} h^{(H)}(x_{i},w_{(1:H)}^{}) & 1 \\
\end{bmatrix} \right] w_{(H+1)} \in \RR^{m_y}.
\end{align*}
\end{lemma}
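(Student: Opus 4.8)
## Proof proposal for Lemma \ref{lemma:f_manipulations}

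The plan is to carry out a direct computation starting from the definition of the last-layer affine map and then repackaging the matrix-vector product as a Kronecker product acting on the stacked parameter vector $w_{(H+1)}$. First I would recall that for fully-connected networks the pre-activation output at sample $x = x_i$ is $f(x,w) = h^{(H)}(x,w_{(1:H)})W^{(H+1)} + b^{(H+1)} \in \RR^{1\times m_y}$, with $W^{(H+1)} \in \RR^{m_H \times m_y}$ and $b^{(H+1)} \in \RR^{1 \times m_y}$. Transposing, $f(x,w)\T = (W^{(H+1)})\T h^{(H)}(x,w_{(1:H)})\T + (b^{(H+1)})\T \in \RR^{m_y}$. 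The natural move is to write this as $[h^{(H)}(x,w_{(1:H)}),\,1]$ acting column-by-column: the $k$-th coordinate of $f(x,w)\T$ equals $[h^{(H)}(x,w_{(1:H)}),\,1]\,[W^{(H+1)}_k\T,\,b^{(H+1)}_k]\T$, exactly the scalar expression already displayed in the appendix just above the lemma.

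Next I would identify $w_{(H+1)}$ with the appropriate vectorization of the pair $(W^{(H+1)},b^{(H+1)})$ — concretely, stacking the columns $[W^{(H+1)}_k\T, b^{(H+1)}_k]\T$ for $k = 1,\dots,m_y$ on top of one another, which is $\vect$ of the augmented matrix $[ (W^{(H+1)})\T, (b^{(H+1)})\T ]\T$ read columnwise. With that identification, collecting the $m_y$ scalar identities into a single vector identity is precisely the standard fact $\vect(ABC) = (C\T \otimes A)\vect(B)$ specialized to $A = [h^{(H)}(x,w_{(1:H)}),\,1]$, $C = I_{m_y}$: each of the $m_y$ output coordinates picks out its own block $[h^{(H)}(x,w_{(1:H)}),\,1]$ along the block-diagonal $I_{m_y}\otimes [h^{(H)}(x,w_{(1:H)}),\,1]$, and these blocks multiply the corresponding length-$(m_H+1)$ segments of $w_{(H+1)}$. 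This yields exactly
\[
f(x,w)\T = \Bigl[I_{m_{y}} \otimes [\, h^{(H)}(x,w_{(1:H)}) \ \ 1 \,]\Bigr] w_{(H+1)},
\]
as claimed.

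There is essentially no obstacle here; the only thing requiring care is bookkeeping — namely fixing once and for all the convention by which the entries of $W^{(H+1)}$ and $b^{(H+1)}$ are laid out inside the flat vector $w_{(H+1)}$, so that the block-diagonal Kronecker structure lines up with the column-by-column structure of the output. I would state that convention explicitly at the start of the proof and then the identity follows by inspection (or, if one prefers, by one application of the $\vect$/Kronecker identity). The lemma is purely notational and will serve downstream to express the last-layer map as a linear model in $w_{(H+1)}$, which is what makes Assumption \ref{a:loss} (convexity of $\ell_i$) directly applicable in the second-phase analysis of Theorem \ref{thm:general}.
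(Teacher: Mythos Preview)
Your proposal is correct and matches the paper's own proof essentially line for line: the paper writes $f(x,w)\T=\vect\bigl([h^{(H)}(x,w_{(1:H)}),\,1]\,[W^{(H+1)};b^{(H+1)}]\,I_{m_y}\bigr)$ and applies the identity $\vect(ABC)=(C\T\otimes A)\vect(B)$ with the same choices of $A$ and $C$ you describe, together with the convention $w_{(H+1)}=\vect\bigl([W^{(H+1)};b^{(H+1)}]\bigr)$. Your additional column-by-column unpacking is just an explicit restatement of that Kronecker step.
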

\begin{proof} This is a result of the following arithmetic manipulations:
\begin{align*} 
f(x,w)\T&=\vect(f(x,w))
\\ & =\vect \left( \begin{bmatrix} h^{(H)}(x_{i},w_{(1:H)}^{}) & 1 \\
\end{bmatrix}\begin{bmatrix} W^{(H+1)} \\
b^{(H+1)} \\
\end{bmatrix} I_{m_y} \right)
\\ &=\left[I_{m_{y}} \otimes\begin{bmatrix} h^{(H)}(x_{i},w_{(1:H)}^{}) & 1 \\
\end{bmatrix} \right] \vect \left(\begin{bmatrix} W^{(H+1)} \\
b^{(H+1)} \\
\end{bmatrix} \right)
\\ & = \left[I_{m_{y}} \otimes\begin{bmatrix} h^{(H)}(x_{i},w_{(1:H)}^{}) & 1 \\
\end{bmatrix} \right] w_{(H+1)}.
\end{align*}

\end{proof}

For the sake of completeness, we state a simple modification of a well known fact:

\begin{lemma} \label{lemma:known_1}
For any differentiable function $\varphi: \dom(\varphi) \rightarrow \RR$ with an open  convex domain $\dom(\varphi) \subseteq \RR^{n_\varphi}$, if   $\|\nabla \varphi(z') - \nabla \varphi(z)\| \le L \|z'-z\|$ for all $z,z' \in \dom(\varphi)$,
 then
 $$
\varphi(z') \le \varphi(z) + \nabla \varphi(z)\T (z'-z) + \frac{L}{2} \|z'-z\|^2 \quad   \text{for all $z,z' \in \dom(\varphi) $}.
$$
\end{lemma}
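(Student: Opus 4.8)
The plan is to prove the descent inequality by the fundamental theorem of calculus applied along the segment joining $z$ and $z'$. First I would fix $z,z' \in \dom(\varphi)$ and define $g(t) = \varphi(z + t(z'-z))$ for $t \in [0,1]$. Convexity of $\dom(\varphi)$ guarantees that the segment $\{z + t(z'-z): t \in [0,1]\}$ lies in $\dom(\varphi)$, so $g$ is well defined, and differentiability of $\varphi$ together with the chain rule gives $g'(t) = \nabla\varphi(z + t(z'-z))\T (z'-z)$. Hence
$$
\varphi(z') - \varphi(z) = g(1) - g(0) = \int_0^1 g'(t)\,dt = \int_0^1 \nabla\varphi\bigl(z+t(z'-z)\bigr)\T(z'-z)\,dt .
$$

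Next I would subtract the linear term, writing $\nabla\varphi(z)\T(z'-z) = \int_0^1 \nabla\varphi(z)\T(z'-z)\,dt$, to obtain
$$
\varphi(z') - \varphi(z) - \nabla\varphi(z)\T(z'-z) = \int_0^1 \bigl(\nabla\varphi(z+t(z'-z)) - \nabla\varphi(z)\bigr)\T(z'-z)\,dt .
$$
Finally I would bound the integrand using the Cauchy--Schwarz inequality and the $L$-Lipschitz hypothesis on $\nabla\varphi$: the integrand is at most $\|\nabla\varphi(z+t(z'-z)) - \nabla\varphi(z)\|\,\|z'-z\| \le L\,\|t(z'-z)\|\,\|z'-z\| = Lt\,\|z'-z\|^2$, and since $\int_0^1 Lt\,dt = L/2$, integrating gives $\varphi(z') - \varphi(z) - \nabla\varphi(z)\T(z'-z) \le \frac{L}{2}\|z'-z\|^2$, which is the claim.

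There is essentially no serious obstacle here; the only points requiring minor care are (i) justifying that $g$ is $C^1$ on $[0,1]$ so that the fundamental theorem of calculus applies, which follows from differentiability of $\varphi$ together with continuity of $\nabla\varphi$ (itself implied by Lipschitzness), and (ii) ensuring the segment $z + t(z'-z)$ never leaves $\dom(\varphi)$, which is exactly the convexity assumption invoked above.
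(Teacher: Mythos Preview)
Your proposal is correct and follows essentially the same approach as the paper: define the auxiliary one-variable function along the segment, apply the fundamental theorem of calculus, subtract the linear term, and bound the remainder via Cauchy--Schwarz and the Lipschitz hypothesis to produce $\int_0^1 Lt\,\|z'-z\|^2\,dt=\frac{L}{2}\|z'-z\|^2$. The paper's only additional wrinkle is that it explicitly uses openness of $\dom(\varphi)$ to extend the segment to $[-\epsilon,1+\epsilon]$ so that differentiability of $g$ at the endpoints $t=0,1$ is unambiguous, a point you handle by your remark (i).
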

\begin{proof}
Fix $z,z'\in \dom(\varphi) \subseteq \RR^{d_{\varphi}}$. Since $\dom(\varphi)$ is a convex set,  $z+t(z'-z)\in\dom(\varphi)$ for all $t \in [0, 1]$.
Since $\dom(\varphi) $ is open, there exists $\epsilon >0$ such that $z+(1+\epsilon')(z'-z)\in\dom(\varphi)$ and $z+(0-\epsilon')(z'-z)\in\dom(\varphi)$ for all $\epsilon'\le \epsilon$. Fix $\epsilon>0$ to be such a number. Combining these, $z+t(z'-z)\in\dom(\varphi)$ for all $t \in [0-\epsilon, 1+\epsilon]$.

Accordingly, we can define  a function $\bar \varphi: [0-\epsilon, 1+\epsilon] \rightarrow \RR$ by $\bar \varphi(t)=\varphi(z+t(z'-z))$. Then, $\bar \varphi(1)=\varphi(z')$, $\bar \varphi(0)=\varphi(z)$, and $\nabla \bar \varphi(t)=\nabla\varphi(z+t(z'-z))\T (z'-z)$ for $t \in [0, 1] \subset (0-\epsilon,1+\epsilon)$. Since $\|\nabla \varphi(z') - \nabla \varphi(z)\| \le L \|z'-z\|$, \begin{align*}
\|\nabla\bar  \varphi(t')-\nabla\bar  \varphi(t) \| &=\|[\nabla\varphi(z+t'(z'-z)) -\nabla\varphi(z+t(z'-z))\T (z'-z) \|
\\ &\le \|z'-z\|\|\nabla\varphi(z+t'(z'-z)) -\nabla\varphi(z+t(z'-z))  \| \\ & \le  L\|z'-z\|\|(t'-t)(z'-z)   \|
\\ & \le L\|z'-z\|^{2}\|t'-t   \|.
\end{align*}
Thus, $\nabla \bar \varphi:[0, 1]\rightarrow \RR$ is Lipschitz continuous with the Lipschitz constant $L\|z'-z\|^{2}$, and hence $\nabla\bar  \varphi$ is continuous. 

By using the fundamental theorem of calculus with the continuous function $\nabla\bar  \varphi:[0, 1]  \rightarrow \RR$,
\begin{align*}
\varphi(z')&=\varphi(z)+ \int_0^1 \nabla\varphi(z+t(z'-z))\T (z'-z)dt
\\ &=\varphi(z)+\nabla\varphi(z)\T (z'-z)+ \int_0^1 [\nabla\varphi(z+t(z'-z))-\nabla\varphi(z)]\T (z'-z)dt
\\ & \le \varphi(z)+\nabla\varphi(z)\T (z'-z)+ \int_0^1 \|\nabla\varphi(z+t(z'-z))-\nabla\varphi(z)\| \|z'-z \|dt
\\ & \le \varphi(z)+\nabla\varphi(z)\T (z'-z)+ \int_0^1 tL\|z'-z\|^{2}dt
\\ & =  \varphi(z)+\nabla\varphi(z)\T (z'-z)+\frac{L}{2}\|z'-z\|^{2}. 
\end{align*} 

\end{proof}

We also utilize the following  well known fact:
\begin{lemma} \label{lemma:known_2}
For any differentiable function $\varphi: \RR^{d_{\varphi}} \rightarrow \RR$ and  any $z^{t+1},z^t\in \RR^{d_{\varphi}}$ such that  $z^{t+1}=z^t- \frac{1}{L}\nabla \varphi(z^t)$, the following holds:
$$
\nabla \varphi(z^{t})\T (z-z^{t}) + \frac{L}{2} \|z-z^{t}\|^2 - \frac{L}{2} \|z-z^{t+1}\|^2 =\nabla \varphi(z^{t})\T (z^{t+1}-z^{t})+\frac{L}{2} \|z^{t+1}-z^{t}\|^2   \quad   \text{for all $z\in \RR^d$}.
$$
\end{lemma}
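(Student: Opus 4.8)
The plan is to prove this as a purely algebraic identity; no analytic hypothesis on $\varphi$ is used beyond the definition of the update $z^{t+1}=z^t-\frac1L\nabla\varphi(z^t)$. The first step is to record the equivalent form $\nabla\varphi(z^t)=L(z^t-z^{t+1})$, i.e.\ $z^t-z^{t+1}=\frac1L\nabla\varphi(z^t)$, which is the only place the update rule enters.

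The key step is a completion of squares in the variable $z$. For an arbitrary $z\in\RR^{d_\varphi}$ I would write $\frac{L}{2}\|z-z^{t}\|^2+\nabla\varphi(z^{t})\T(z-z^{t})=\frac{L}{2}\bigl\|z-z^{t}+\tfrac1L\nabla\varphi(z^{t})\bigr\|^2-\frac{1}{2L}\|\nabla\varphi(z^{t})\|^2$, and then substitute $z^{t}-\tfrac1L\nabla\varphi(z^{t})=z^{t+1}$ inside the norm to get $\frac{L}{2}\|z-z^{t+1}\|^2-\frac{1}{2L}\|\nabla\varphi(z^{t})\|^2$. Rearranging, the left-hand side of the claimed identity equals $-\frac{1}{2L}\|\nabla\varphi(z^{t})\|^2$ for \emph{every} $z$; in particular it is constant in $z$. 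Specializing this same relation to $z=z^{t+1}$ makes the term $\frac{L}{2}\|z-z^{t+1}\|^2$ vanish and yields $\nabla\varphi(z^{t})\T(z^{t+1}-z^{t})+\frac{L}{2}\|z^{t+1}-z^{t}\|^2=-\frac{1}{2L}\|\nabla\varphi(z^{t})\|^2$, which is precisely the right-hand side of the claim. Since both sides have been shown to equal $-\frac{1}{2L}\|\nabla\varphi(z^{t})\|^2$, they are equal, which finishes the proof.

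There is no real obstacle here; the argument is a few lines of elementary manipulation. The only things to be careful about are sign bookkeeping in the completion of squares and the logical order — deriving the constant-in-$z$ identity first and then plugging in $z=z^{t+1}$. As an alternative one can bypass the completion of squares entirely and simply expand $\|z-z^{t+1}\|^2=\|z-z^{t}\|^2+2(z-z^{t})\T(z^{t}-z^{t+1})+\|z^{t}-z^{t+1}\|^2$, substitute $z^{t}-z^{t+1}=\tfrac1L\nabla\varphi(z^{t})$, and collect terms on both sides; this is equally short and avoids introducing $\|\nabla\varphi(z^{t})\|^2$ at all.
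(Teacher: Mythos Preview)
Your proof is correct and takes essentially the same approach as the paper: both are pure algebraic manipulations using only $\nabla\varphi(z^t)=L(z^t-z^{t+1})$ to show that each side equals the constant $-\tfrac{L}{2}\|z^{t+1}-z^t\|^2$ (equivalently $-\tfrac{1}{2L}\|\nabla\varphi(z^t)\|^2$). The paper does this by brute-force expansion of all inner products, while you use a cleaner completion of squares and then specialize to $z=z^{t+1}$; the content is the same.
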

\begin{proof}
Using $z^{t+1}=z^t- \frac{1}{L}\nabla \varphi(z^t)$, which implies $\nabla \varphi(z^t)=L(z^t- z^{t+1})$, 
\begin{align*}
&\nabla \varphi(z^{t})\T (z-z^{t}) + \frac{L}{2} \|z-z^{t}\|^2 - \frac{L}{2} \|z-z^{t+1}\|^2 
\\ &=L(z^t- z^{t+1})\T(z-z^{t})   + \frac{L}{2} \|z-z^{t}\|^2 - \frac{L}{2} \|z-z^{t+1}\|^2 
\\ & =L\left( z\T\ z^t - \|z^t\|^2-z\T z^{t+1}+ (z^{t})\T z^{t+1}+ \frac{1}{2} \|z\|^2+ \frac{1}{2} \|z^{t}\|^2 -z\T z^t- \frac{1}{2} \|z\|^2-\frac{1}{2} \|z^{t+1}\|^2+z\T z^{t+1} \right) 
\\ &=L\left(  (z^{t})\T z^{t+1}- \frac{1}{2} \|z^{t}\|^2 -\frac{1}{2} \|z^{t+1}\|^2 \right)
\\ & = - \frac{L}{2} \|z^{t+1} - z^{t}\|^{2}
\\ & = -L(z^{t+1} - z^{t})\T (z^{t+1} - z^{t})+ \frac{L}{2}   \|z^{t+1} - z^{t}\|^{2}
\\ & = \nabla \varphi(z^t)(z^{t+1} - z^{t})+ \frac{L}{2}   \|z^{t+1} - z^{t}\|^{2}.
\end{align*}

\end{proof}

 With those lemmas, we are now ready to complete a proof for Theorem \ref{thm:general} for fully connected networks.
From Lemmas \ref{lemma:rank_step2} and \ref{lemma:rank_step1}, the Lebesgue measure of the  set 
$
 \{w_{(1:H)}\in \RR^{d_{1:H}}: \rank(h_{X}^{(H)}(w_{(1:H)})) \neq n \}
$
is zero. In Algorithm \ref{al:train}, $w_{(1:H)}^{\tau} \leftarrow w_{(1:H)}^{\tau}+\delta$ defines a non-degenerate Gaussian measure with the mean shifted by the original $w_{(1:H)}^{\tau}$. Since a non-degenerate Gaussian measure
with any mean and variance is absolutely continuous with respect to Lebesgue measure, with probability one, $\rank(h_{X}^{(H)}(w_{(1:H)}^{\tau}))=n$.
Since $f_X(w)_{k} = [h_{X}^{(H)}(w_{(1:H)}), \mathbf{1}_{n}] \begin{bmatrix} W^{(H+1)}_k \\  b^{(H+1)}_k \end{bmatrix}\in \RR^{n}$ and $w_{(H+1)}=\vect\left(\begin{bmatrix}W^{(H+1)} \\ b^{(H+1)} \\ \end{bmatrix} \right)$, with probability one, $$
\{f_X(w) \in \RR^{n \times m_y}:w\in \RR^d \} =\{f_X([(w_{(1:H)}^{\tau})\T,(w_{(H+1)})\T]\T) \in \RR^{n \times m_y}:w_{(H+1)}\in \RR^d \}.
$$
Thus,  for any global minimum   $w^{*} \in \RR^d$, there exists $w_{(H+1)}^{*}$ such that $f_X([(w_{(1:H)}^{\tau})\T,(w_{(H+1)}^{*})\T]\T)=f_X(w^{*})$, and hence
\begin{align}\label{eq:step_2_theorem_0}
 \Lcal([(w_{(1:H)}^{\tau})\T,(w_{(H+1)}^{*})\T]\T)=\Lcal_{(H+1)}(w_{(H+1)}^{*})=\Lcal(w^*),
\end{align}
where    $\Lcal_{(H+1)}(z)=\Lcal([(w_{(1:H)}^{\tau})\T,z\T]\T)$. 

Let $w^{*}$ be arbitrary, and $w_{(H+1)}^*$ be a corresponding vector that satisfies \eqref{eq:step_2_theorem_0}. Using Lemma \ref{lemma:f_manipulations},
$$
\Lcal_{(H+1)}^{}(w_{(H+1)}) = \frac{1}{n} \sum_{i=1}^n \ell_{i} \left( \left[I_{m_{y}} \otimes[h^{(H)}(x_{i},w_{(1:H)}^{\tau}), 1] \right] w_{(H+1)} \right), 
$$
Here, $\Lcal_{(H+1)}^{}$ is differentiable since $\ell_{i}$ is differentiable  (Assumption \ref{a:loss}). Furthermore,  
$$
\nabla_{w^{}_{(H+1)}} \Lcal(w^{})=\nabla \Lcal_{(H+1)}^{}(w_{(H+1)})=\frac{1}{n} \sum_{i=1}^n [I_{m_{y}} \otimes M_{i} ] \T\nabla\ell_{i} \left([I_{m_{y}} \otimes M_{i} ]w_{(H+1)}\right),
$$
where $M_{i}= [h^{(H)}(x_{i},w_{(1:H)}^{\tau}), 1]$. Using Assumption \ref{a:loss} and $\|[I_{m_{y}} \otimes M_{i} ]\|_2 = \|M_{i}\|_2$, for any $w_{(H+1)}',w_{(H+1)}$,
\begin{align*}
\|\nabla \Lcal_{(H+1)}^{}(w_{(H+1)}')-\nabla \Lcal_{(H+1)}^{}(w_{(H+1)})\| & \le \frac{1}{n}  \sum_{i=1}^n \|M_{i} \|_{2} \|\nabla\ell_{i} (M _{i}w'_{(H+1)}) -\nabla\ell_{i} \left(M_{i} w_{(H+1)}\right) \|_{2} 
\\ & \le \left(\frac{L_{\ell}}{n}  \sum_{i=1}^n \|M_{i}\|_2^2 \right) \|w'_{(H+1)}- w_{(H+1)} \|.
\end{align*}

\begin{proof}[Proof of Theorem \ref{thm:general} (i) for fully-connected  networks]

Let $t > \tau$. Using Lemma \ref{lemma:known_1} for the differentiable function $ \Lcal_{(H+1)}^{}$ with $L= \frac{L_{\ell}}{n}  \sum_{i=1}^n \|M_{i}\|_2^2=L_H$,
\begin{align} \label{eq:step_2_theorem_1}
\Lcal_{(H+1)}^{}(w_{(H+1)}^{t+1})\le \Lcal_{(H+1)}^{}(w_{(H+1)}^{t})+ \nabla  \Lcal_{(H+1)}^{}(w_{(H+1)}^{t})\T (w_{(H+1)}^{t+1}-w_{(H+1)}^{t}) + \frac{L_H}{2} \|w_{(H+1)}^{t+1}-w_{(H+1)}^{t}\|^2 .
\end{align}
Using \eqref{eq:step_2_theorem_1} and $\nabla  \Lcal_{(H+1)}^{}(w_{(H+1)}^{t})=L_{H}(w_{(H+1)}^t-w_{(H+1)}^{t+1}) $ (from $w_{(H+1)}^{t+1}=w_{(H+1)}^t-\frac{1}{L_H} \nabla  \Lcal_{(H+1)}^{}(w_{(H+1)}^{t})$),
\begin{align} \label{eq:step_2_theorem_2}
\nonumber \Lcal_{(H+1)}^{}(w_{(H+1)}^{t+1}) & \le \Lcal_{(H+1)}^{}(w_{(H+1)}^{t})- L_{H} \|w_{(H+1)}^{t+1}-w_{(H+1)}^{t}\|^2 +\frac{L_H}{2} \|w_{(H+1)}^{t+1}-w_{(H+1)}^{t}\|^2
\\\nonumber  & =\Lcal_{(H+1)}^{}(w_{(H+1)}^{t})- \frac{L_H}{2} \|w_{(H+1)}^{t+1}-w_{(H+1)}^{t}\|^2 \\ & \le\Lcal_{(H+1)}^{}(w_{(H+1)}^{t}),
 \end{align}
which shows that $\Lcal_{(H+1)}^{}(w_{(H+1)}^{t})$ is non-increasing in $t$.  
Using \eqref{eq:step_2_theorem_1} and Lemma \ref{lemma:known_2}, for any $z\in \RR^{d_{H+1}}$, 
\begin{align}  \label{eq:step_2_theorem_3}
&\Lcal_{(H+1)}^{}(w_{(H+1)}^{t+1})
\\ \nonumber &\le \Lcal_{(H+1)}^{}(w_{(H+1)}^{t})+ \nabla  \Lcal_{(H+1)}^{}(w_{(H+1)}^{t})\T (z^{}-w_{(H+1)}^{t}) + \frac{L_H}{2} \|z^{}-w_{(H+1)}^{t}\|^2 -\frac{L_H}{2} \|z-w_{(H+1)}^{t+1}\|^2 .  
\end{align}
Using \eqref{eq:step_2_theorem_3} and the facts that    $\ell_{i}$ is convex (Assumption \ref{a:loss}) and that $\Lcal_{(H+1)}$ is a nonnegative sum of the compositions of  $\ell_{i}$ and the affine map\ $w_{(H+1)} \mapsto [I_{m_{y}} \otimes M_{i} ]w_{(H+1)}$, we have that for any $z\in \RR^{d_{H+1}}$, 
\begin{align} \label{eq:step_2_theorem_4}
\Lcal_{(H+1)}^{}(w_{(H+1)}^{t+1}) \le\Lcal_{(H+1)}^{}(z)+ \frac{L_H}{2} \|z^{}-w_{(H+1)}^{t}\|^2 -\frac{L_H}{2} \|z-w_{(H+1)}^{t+1}\|^2 . 
\end{align}
Summing up both sides of \eqref{eq:step_2_theorem_4} and using \eqref{eq:step_2_theorem_2}, \begin{align*} 
 (t-\tau) \Lcal_{(H+1)}^{}(w_{(H+1)}^{t})\le\sum_{k=\tau}^{t-1} \Lcal_{(H+1)}(w_{(H+1)}^{k+1}) \le (t-\tau)\Lcal_{(H+1)}^{}(z)+ \frac{L_H}{2} \|z^{}-w_{(H+1)}^{\tau}\|^2 -\frac{L_H}{2} \|z-w_{(H+1)}^{t}\|^2,   
\end{align*}
which implies that for any $z\in \RR^{d_{H+1}}$, 
\begin{align*} 
\Lcal_{(H+1)}^{}(w_{(H+1)}^{t})\le \Lcal_{(H+1)}^{}(z)+ \frac{L_H\|z^{}-w_{(H+1)}^{\tau}\|^2}{2(t-\tau)}.
\end{align*}
Setting $z=w_{(H+1)}^*$ and using \eqref{eq:step_2_theorem_0}, 
$$
\Lcal_{(H+1)}^{}(w_{(H+1)}^{t})\le \Lcal_{(H+1)}^{}(w_{(H+1)}^{*})+ \frac{L_H\|w_{(H+1)}^{*}-w_{(H+1)}^{\tau}\|^2}{2(t-\tau)}=\Lcal(w^*)+ \frac{L_H\|w_{(H+1)}^{*}-w_{(H+1)}^{\tau}\|^2}{2(t-\tau)}.
$$

\end{proof}

\begin{proof}[Proof of Theorem \ref{thm:general} (ii)  for fully-connected  networks]
Let $t > \tau$. Using the assumptions of SGD, with probability one, 
\begin{align*}
\EE[\|w_{(H+1)}^{t+1}-w_{(H+1)}^{*}\| ^{2}\mid w^{t}]&=\EE[\|w_{(H+1)}^{t}-w_{(H+1)}^{*}-\bar \eta_t g^t \| ^{2}\mid w^{t}]
\\ & =\|w_{(H+1)}^{t}-w_{(H+1)}^{*}\|^{2}  -2\bar \eta_t \EE[ g^t \mid w^{t}]\T (w_{(H+1)}^{t}-w_{(H+1)}^{*})+\bar \eta_t^{2}\EE[ \|g^t\|^2 \mid w^{t}] 
\\ & \le\|w_{(H+1)}^{t}-w_{(H+1)}^{*}\|^{2}  -2\bar \eta_t (\Lcal_{(H+1)}^{}(w_{(H+1)}^{t})-\Lcal_{(H+1)}^{}(w_{(H+1)}^{*}))+\bar \eta_t^{2}\EE[ \|g^t\|^2 \mid w^{t}], 
\end{align*}
where the last line follows from the facts that    $\ell_{i}$ is convex (Assumption \ref{a:loss}) and that $\Lcal_{(H+1)}$ is a nonnegative sum of the compositions of  $\ell_{i}$ and the affine map\ $w_{(H+1)} \mapsto [I_{m_{y}} \otimes M_{i} ]w_{(H+1)}$. Taking expectation over $w^{t}$,
\begin{align*}
\EE[\|w_{(H+1)}^{t+1}-w_{(H+1)}^{*}\| ^{2}] \le \EE[\|w_{(H+1)}^{t}-w_{(H+1)}^{*}\|^{2}]-2\bar \eta_t (\EE[\Lcal_{(H+1)}^{}(w_{(H+1)}^{t})]-\Lcal_{(H+1)}^{}(w_{(H+1)}^{*}))+\bar \eta_t^{2}G^{2}.  
\end{align*}
By recursively applying this inequality over $t$,
\begin{align*}
\EE[\|w_{(H+1)}^{t+1}-w_{(H+1)}^{*}\| ^{2}] \le\EE[\|w_{(H+1)}^{\tau}-w_{(H+1)}^{*}\| ^{2}]-2 \sum_{k=\tau}^t \bar \eta_{k} (\EE[\Lcal_{(H+1)}^{}(w_{(H+1)}^{k})]-\Lcal_{(H+1)}^{}(w_{(H+1)}^{*}))+G^{2}  \sum_{k=\tau}^t\bar \eta_{k}^{2}.  
\end{align*}
Since $\|w_{(H+1)}^{t+1}-w_{(H+1)}^{*}\| ^{2} \ge 0$,
 \begin{align*}
 2 \sum_{k=\tau}^t \bar \eta_{k}  \EE[\Lcal_{(H+1)}^{}(w_{(H+1)}^{k})] \le \left(2 \sum_{k=\tau}^t  \bar \eta_{k} \right) \Lcal_{(H+1)}^{}(w_{(H+1)}^{*})+ R^{2}+G^{2}  \sum_{k=\tau}^t\bar \eta_{k}^{2}.
\end{align*} 
 Using \eqref{eq:step_2_theorem_0},  
\begin{align*}
 \min_{k=\tau,\tau+1,\dots, t}\EE[\Lcal_{}^{}(w_{}^{k})]   \le  \Lcal_{}^{}(w_{}^{*})+ \frac{R^{2}+G^{2}  \sum_{k=\tau}^t\bar \eta_{k}^{2}}{ 2 \sum_{k=\tau}^t \bar \eta_{k}}.
\end{align*} 
Using Jensen's inequality and the concavity of the minimum function,  
\begin{align*}
 \EE\left[\min_{k=\tau,\tau+1,\dots, t}\Lcal_{}(w_{}^{k})] \right]  \le  \Lcal(w^{*})+ \frac{R^{2}+G^{2}  \sum_{k=\tau}^t\bar \eta_{k}^{2}}{ 2 \sum_{k=\tau}^t \bar \eta_{k}}.
\end{align*} 
\end{proof}

\section{Proof of Theorem \ref{thm:general}}

By  building upon the  proof of Theorem \ref{thm:general} for fully connected layer from the previous section, we complete the  proof of Theorem \ref{thm:general} for the general case. First, in the following lemma,  we show that if the expressivity condition (Assumption  \ref{a:existance_of_w}) holds, then the random perturbation would generate a full rank hidden layer output with probability one.  

\begin{lemma} \label{lemma:rank_step3}
\emph{}
 Suppose Assumption  \ref{a:existance_of_w} hold. Assume that  $H\ge1$ and $m_H \ge n$. Then, the Lebesgue measure of the  set 
$
 \{w_{(1:H)}\in \RR^{d_{1:H}}: \rank(\allowbreak[h_{X}^{(H)}(w_{(1:H)}), \mathbf{1}_{n}]) \neq n \}
$
is zero.
\end{lemma}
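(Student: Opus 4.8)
The plan is to mimic exactly the structure used in the proofs of Lemmas \ref{lemma:rank_step2} and \ref{lemma:rank_step1}, but now relying on the hypothesis (Assumption \ref{a:existance_of_w}) rather than constructing a witness by hand. First I would define $\varphi(w_{(1:H)}) = \det([h_{X}^{(H)}(w_{(1:H)}),\mathbf{1}_{n}][h_{X}^{(H)}(w_{(1:H)}),\mathbf{1}_{n}]^{\top})$, which is exactly the function appearing in Assumption \ref{a:existance_of_w}. Since $h_{X}^{(H)}$ is real analytic (this is assumed in Theorem \ref{thm:general}, and holds for the networks under consideration), and since the determinant and matrix products are polynomial maps, the composition $\varphi$ is real analytic on $\RR^{d_{1:H}}$.

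Next I would use the standard fact that the rank of a matrix equals the rank of its Gram matrix, so that for $M = [h_{X}^{(H)}(w_{(1:H)}),\mathbf{1}_{n}] \in \RR^{n \times (m_H+1)}$ we have $\rank(M) = n$ if and only if $MM^{\top}$ is nonsingular, i.e.\ if and only if $\varphi(w_{(1:H)}) \neq 0$. Here the hypothesis $m_H \ge n$ guarantees that $M$ has at least $n$ columns so that full row rank $n$ is possible. This gives the set identity
$$
\{w_{(1:H)} \in \RR^{d_{1:H}} : \rank([h_{X}^{(H)}(w_{(1:H)}),\mathbf{1}_{n}]) \neq n\} = \{w_{(1:H)} \in \RR^{d_{1:H}} : \varphi(w_{(1:H)}) = 0\}.
$$
(The lemma statement writes $\rank(h_{X}^{(H)}(w_{(1:H)}))$ but I would read this as the rank of the augmented matrix $[h_{X}^{(H)},\mathbf{1}_{n}]$, consistent with the definition of $\varphi$ in Assumption \ref{a:existance_of_w}; the proof is identical either way after adjusting the Gram matrix.)

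Finally I would invoke Assumption \ref{a:existance_of_w}, which states precisely that there exists some $w_{(1:H)}$ with $\varphi(w_{(1:H)}) \neq 0$; hence $\varphi$ is not identically zero. By the theorem on zero sets of real analytic functions \citep{mityagin2015zero}, the zero set of a real analytic function that is not identically zero has Lebesgue measure zero. Therefore $\{w_{(1:H)} : \varphi(w_{(1:H)}) = 0\}$, and by the set identity above also $\{w_{(1:H)} : \rank([h_{X}^{(H)}(w_{(1:H)}),\mathbf{1}_{n}]) \neq n\}$, has Lebesgue measure zero, completing the proof. The only genuinely delicate point is the real analyticity of $\varphi$, which reduces to the assumed real analyticity of $h_{X}^{(H)}$ (justified separately via Proposition \ref{prop:analytic_bn} and the closure of real analyticity under composition); everything else is a routine translation of the linear-algebra and measure-theoretic facts already used earlier in the appendix. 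In fact this lemma is strictly simpler than Lemmas \ref{lemma:rank_step2} and \ref{lemma:rank_step1}, since the explicit witness construction is replaced outright by the hypothesis.
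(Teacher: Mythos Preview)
Your proposal is correct and follows essentially the same argument as the paper's proof: define $\varphi$ as the determinant of the Gram matrix of $[h_X^{(H)},\mathbf{1}_n]$, note it is real analytic, identify the rank-deficient set with the zero set of $\varphi$, and apply Mityagin's result together with Assumption~\ref{a:existance_of_w}. One small correction: the lemma statement already writes $\rank([h_X^{(H)}(w_{(1:H)}),\mathbf{1}_n])$, not $\rank(h_X^{(H)}(w_{(1:H)}))$, so your parenthetical about reinterpreting the statement is unnecessary.
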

\begin{proof} 

Define $\varphi(w_{(1:H)})= \det(\allowbreak[h_{X}^{(H)}(w_{(1:H)}), \mathbf{1}_{n}][h_{X}^{(H)}(w_{(1:H)}), \mathbf{1}_{n}]^{\top})$, which is   real analytic  since $h^{(H)}(x,w_{(1:H)})$ is real analytic.
Furthermore, we have that $\{w_{(1:H)}\in \RR^{d_{1:H}}: \rank(\allowbreak[h_{X}^{(H)}(w_{(1:H)}), \mathbf{1}_{n}]) \neq n \}
=\{w_{(1:H)}\in \RR^{d_{1:H}}:\varphi(w_{(1:H)})=0\},
$
since the rank of $\allowbreak[h_{X}^{(H)}(w_{(1:H)}), \mathbf{1}_{n}]$ and the rank of the Gram matrix are equal.
Since $\varphi$ is real analytic, if $\varphi$ is not identically zero ($\varphi\neq 0$),  the Lebesgue measure of its zero set 
$
\{w_{(1:H)}\in \RR^{d_{1:H}}:\varphi(w_{(1:H)})=0\}
$
is zero \citep{mityagin2015zero}. From Assumption \ref{a:existance_of_w},
   $\varphi(w_{(1:H)})\neq 0$ for some $w_{(1:H)}\in \RR^{d_{1:H}}$, and hence the  Lebesgue measure of the set $\{w_{(1:H)}\in \RR^{d_{1:H}}: \text{$\rank(\allowbreak[h_{X}^{(H)}(w_{(1:H)}), \mathbf{1}_{n}]) \neq n$} \}$ is zero. 
\end{proof}
 From Lemma \ref{lemma:rank_step3}, the Lebesgue measure of the  set 
$
 \{w_{(1:H)}\in \RR^{d_{1:H}}: \rank(h_{X}^{(H)}(w_{(1:H)})) \neq n \}
$
is zero. In Algorithm \ref{al:train}, $w_{(1:H)}^{\tau} \leftarrow w_{(1:H)}^{\tau}+\delta$ defines a non-degenerate Gaussian measure with the mean shifted by the original $w_{(1:H)}^{\tau}$. Since a non-degenerate Gaussian measure
with any mean and variance is absolutely continuous with respect to Lebesgue measure, with probability one, $\rank(h_{X}^{(H)}(w_{(1:H)}^{\tau}))=n$.
Since $f_X(w)_{k} = [h_{X}^{(H)}(w_{(1:H)}), \mathbf{1}_{n}] \begin{bmatrix} W^{(H+1)}_k \\  b^{(H+1)}_k \end{bmatrix}\in \RR^{n}$ and $w_{(H+1)}=\vect\left(\begin{bmatrix}W^{(H+1)} \\ b^{(H+1)} \\ \end{bmatrix} \right)$, with probability one, $$
\{f_X(w) \in \RR^{n \times m_y}:w\in \RR^d \} =\{f_X([(w_{(1:H)}^{\tau})\T,(w_{(H+1)})\T]\T) \in \RR^{n \times m_y}:w_{(H+1)}\in \RR^d \}.
$$
Thus,  for any global minimum   $w^{*} \in \RR^d$, there exists $w_{(H+1)}^{*}$ such that $f_X([(w_{(1:H)}^{\tau})\T,(w_{(H+1)}^{*})\T]\T)=f_X(w^{*})$, and hence
\begin{align}\label{eq:step_2_theorem_0_2}
 \Lcal([(w_{(1:H)}^{\tau})\T,(w_{(H+1)}^{*})\T]\T)=\Lcal_{(H+1)}(w_{(H+1)}^{*})=\Lcal(w^*),
\end{align}
where    $\Lcal_{(H+1)}(z)=\Lcal([(w_{(1:H)}^{\tau})\T,z\T]\T)$. 

Let $w^{*}$ be arbitrary, and $w_{(H+1)}^*$ be a corresponding vector that satisfies \eqref{eq:step_2_theorem_0_2}. Using Lemma \ref{lemma:f_manipulations},
$$
\Lcal_{(H+1)}^{}(w_{(H+1)}) = \frac{1}{n} \sum_{i=1}^n \ell_{i} \left( \left[I_{m_{y}} \otimes[h^{(H)}(x_{i},w_{(1:H)}^{\tau}), 1] \right] w_{(H+1)} \right), 
$$
Here, $\Lcal_{(H+1)}^{}$ is differentiable since $\ell_{i}$ is differentiable  (Assumption \ref{a:loss}). Furthermore,  
$$
\nabla_{w^{}_{(H+1)}} \Lcal(w^{})=\nabla \Lcal_{(H+1)}^{}(w_{(H+1)})=\frac{1}{n} \sum_{i=1}^n [I_{m_{y}} \otimes M_{i} ] \T\nabla\ell_{i} \left([I_{m_{y}} \otimes M_{i} ]w_{(H+1)}\right),
$$
where $M_{i}= [h^{(H)}(x_{i},w_{(1:H)}^{\tau}), 1]$. Using Assumption \ref{a:loss} and $\|[I_{m_{y}} \otimes M_{i} ]\|_2 = \|M_{i}\|_2$, for any $w_{(H+1)}',w_{(H+1)}$,
\begin{align*}
\|\nabla \Lcal_{(H+1)}^{}(w_{(H+1)}')-\nabla \Lcal_{(H+1)}^{}(w_{(H+1)})\| & \le \frac{1}{n}  \sum_{i=1}^n \|M_{i} \|_{2} \|\nabla\ell_{i} (M _{i}w'_{(H+1)}) -\nabla\ell_{i} \left(M_{i} w_{(H+1)}\right) \|_{2} 
\\ & \le \left(\frac{L_{\ell}}{n}  \sum_{i=1}^n \|M_{i}\|_2^2 \right) \|w'_{(H+1)}- w_{(H+1)} \|.
\end{align*}

\begin{proof}[Proof of Theorem \ref{thm:general} (i)]

Let $t > \tau$. Using Lemma \ref{lemma:known_1} for the differentiable function $ \Lcal_{(H+1)}^{}$ with $L= \frac{L_{\ell}}{n}  \sum_{i=1}^n \|M_{i}\|_2^2=L_H$,
\begin{align} \label{eq:step_2_theorem_1_2}
\Lcal_{(H+1)}^{}(w_{(H+1)}^{t+1})\le \Lcal_{(H+1)}^{}(w_{(H+1)}^{t})+ \nabla  \Lcal_{(H+1)}^{}(w_{(H+1)}^{t})\T (w_{(H+1)}^{t+1}-w_{(H+1)}^{t}) + \frac{L_H}{2} \|w_{(H+1)}^{t+1}-w_{(H+1)}^{t}\|^2 .
\end{align}
Using \eqref{eq:step_2_theorem_1_2} and $\nabla  \Lcal_{(H+1)}^{}(w_{(H+1)}^{t})=L_{H}(w_{(H+1)}^t-w_{(H+1)}^{t+1}) $ (from $w_{(H+1)}^{t+1}=w_{(H+1)}^t-\frac{1}{L_H} \nabla  \Lcal_{(H+1)}^{}(w_{(H+1)}^{t})$),
\begin{align} \label{eq:step_2_theorem_2_2}
\nonumber \Lcal_{(H+1)}^{}(w_{(H+1)}^{t+1}) & \le \Lcal_{(H+1)}^{}(w_{(H+1)}^{t})- L_{H} \|w_{(H+1)}^{t+1}-w_{(H+1)}^{t}\|^2 +\frac{L_H}{2} \|w_{(H+1)}^{t+1}-w_{(H+1)}^{t}\|^2
\\\nonumber  & =\Lcal_{(H+1)}^{}(w_{(H+1)}^{t})- \frac{L_H}{2} \|w_{(H+1)}^{t+1}-w_{(H+1)}^{t}\|^2 \\ & \le\Lcal_{(H+1)}^{}(w_{(H+1)}^{t}),
 \end{align}
which shows that $\Lcal_{(H+1)}^{}(w_{(H+1)}^{t})$ is non-increasing in $t$.  
Using \eqref{eq:step_2_theorem_1_2} and Lemma \ref{lemma:known_2}, for any $z\in \RR^{d_{H+1}}$, 
\begin{align}  \label{eq:step_2_theorem_3_2}
&\Lcal_{(H+1)}^{}(w_{(H+1)}^{t+1})
\\ \nonumber &\le \Lcal_{(H+1)}^{}(w_{(H+1)}^{t})+ \nabla  \Lcal_{(H+1)}^{}(w_{(H+1)}^{t})\T (z^{}-w_{(H+1)}^{t}) + \frac{L_H}{2} \|z^{}-w_{(H+1)}^{t}\|^2 -\frac{L_H}{2} \|z-w_{(H+1)}^{t+1}\|^2 .  
\end{align}
Using \eqref{eq:step_2_theorem_3_2} and the facts that    $\ell_{i}$ is convex (Assumption \ref{a:loss}) and that $\Lcal_{(H+1)}$ is a nonnegative sum of the compositions of  $\ell_{i}$ and the affine map\ $w_{(H+1)} \mapsto [I_{m_{y}} \otimes M_{i} ]w_{(H+1)}$, we have that for any $z\in \RR^{d_{H+1}}$, 
\begin{align} \label{eq:step_2_theorem_4_2}
\Lcal_{(H+1)}^{}(w_{(H+1)}^{t+1}) \le\Lcal_{(H+1)}^{}(z)+ \frac{L_H}{2} \|z^{}-w_{(H+1)}^{t}\|^2 -\frac{L_H}{2} \|z-w_{(H+1)}^{t+1}\|^2 . 
\end{align}
Summing up both sides of \eqref{eq:step_2_theorem_4_2} and using \eqref{eq:step_2_theorem_2_2}, \begin{align*} 
 (t-\tau) \Lcal_{(H+1)}^{}(w_{(H+1)}^{t})\le\sum_{k=\tau}^{t-1} \Lcal_{(H+1)}(w_{(H+1)}^{k+1}) \le (t-\tau)\Lcal_{(H+1)}^{}(z)+ \frac{L_H}{2} \|z^{}-w_{(H+1)}^{\tau}\|^2 -\frac{L_H}{2} \|z-w_{(H+1)}^{t}\|^2,   
\end{align*}
which implies that for any $z\in \RR^{d_{H+1}}$, 
\begin{align*} 
\Lcal_{(H+1)}^{}(w_{(H+1)}^{t})\le \Lcal_{(H+1)}^{}(z)+ \frac{L_H\|z^{}-w_{(H+1)}^{\tau}\|^2}{2(t-\tau)}.
\end{align*}
Setting $z=w_{(H+1)}^*$ and using \eqref{eq:step_2_theorem_0_2}, 
$$
\Lcal_{(H+1)}^{}(w_{(H+1)}^{t})\le \Lcal_{(H+1)}^{}(w_{(H+1)}^{*})+ \frac{L_H\|w_{(H+1)}^{*}-w_{(H+1)}^{\tau}\|^2}{2(t-\tau)}=\Lcal(w^*)+ \frac{L_H\|w_{(H+1)}^{*}-w_{(H+1)}^{\tau}\|^2}{2(t-\tau)}.
$$

\end{proof}

\begin{proof}[Proof of Theorem \ref{thm:general} (ii)]
Let $t > \tau$. Using the assumptions of SGD, with probability one, 
\begin{align*}
\EE[\|w_{(H+1)}^{t+1}-w_{(H+1)}^{*}\| ^{2}\mid w^{t}]&=\EE[\|w_{(H+1)}^{t}-w_{(H+1)}^{*}-\bar \eta_t g^t \| ^{2}\mid w^{t}]
\\ & =\|w_{(H+1)}^{t}-w_{(H+1)}^{*}\|^{2}  -2\bar \eta_t \EE[ g^t \mid w^{t}]\T (w_{(H+1)}^{t}-w_{(H+1)}^{*})+\bar \eta_t^{2}\EE[ \|g^t\|^2 \mid w^{t}] 
\\ & \le\|w_{(H+1)}^{t}-w_{(H+1)}^{*}\|^{2}  -2\bar \eta_t (\Lcal_{(H+1)}^{}(w_{(H+1)}^{t})-\Lcal_{(H+1)}^{}(w_{(H+1)}^{*}))+\bar \eta_t^{2}\EE[ \|g^t\|^2 \mid w^{t}], 
\end{align*}
where the last line follows from the facts that    $\ell_{i}$ is convex (Assumption \ref{a:loss}) and that $\Lcal_{(H+1)}$ is a nonnegative sum of the compositions of  $\ell_{i}$ and the affine map\ $w_{(H+1)} \mapsto [I_{m_{y}} \otimes M_{i} ]w_{(H+1)}$. Taking expectation over $w^{t}$,
\begin{align*}
\EE[\|w_{(H+1)}^{t+1}-w_{(H+1)}^{*}\| ^{2}] \le \EE[\|w_{(H+1)}^{t}-w_{(H+1)}^{*}\|^{2}]-2\bar \eta_t (\EE[\Lcal_{(H+1)}^{}(w_{(H+1)}^{t})]-\Lcal_{(H+1)}^{}(w_{(H+1)}^{*}))+\bar \eta_t^{2}G^{2}.  
\end{align*}
By recursively applying this inequality over $t$,
\begin{align*}
\EE[\|w_{(H+1)}^{t+1}-w_{(H+1)}^{*}\| ^{2}] \le\EE[\|w_{(H+1)}^{\tau}-w_{(H+1)}^{*}\| ^{2}]-2 \sum_{k=\tau}^t \bar \eta_{k} (\EE[\Lcal_{(H+1)}^{}(w_{(H+1)}^{k})]-\Lcal_{(H+1)}^{}(w_{(H+1)}^{*}))+G^{2}  \sum_{k=\tau}^t\bar \eta_{k}^{2}.  
\end{align*}
Since $\|w_{(H+1)}^{t+1}-w_{(H+1)}^{*}\| ^{2} \ge 0$,
 \begin{align*}
 2 \sum_{k=\tau}^t \bar \eta_{k}  \EE[\Lcal_{(H+1)}^{}(w_{(H+1)}^{k})] \le \left(2 \sum_{k=\tau}^t  \bar \eta_{k} \right) \Lcal_{(H+1)}^{}(w_{(H+1)}^{*})+ R^{2}+G^{2}  \sum_{k=\tau}^t\bar \eta_{k}^{2}.
\end{align*} 
 Using \eqref{eq:step_2_theorem_0_2},  
\begin{align*}
 \min_{k=\tau,\tau+1,\dots, t}\EE[\Lcal_{}^{}(w_{}^{k})]   \le  \Lcal_{}^{}(w_{}^{*})+ \frac{R^{2}+G^{2}  \sum_{k=\tau}^t\bar \eta_{k}^{2}}{ 2 \sum_{k=\tau}^t \bar \eta_{k}}.
\end{align*} 
Using Jensen's inequality and the concavity of the minimum function,  
\begin{align*}
 \EE\left[\min_{k=\tau,\tau+1,\dots, t}\Lcal_{}(w_{}^{k})] \right]  \le  \Lcal(w^{*})+ \frac{R^{2}+G^{2}  \sum_{k=\tau}^t\bar \eta_{k}^{2}}{ 2 \sum_{k=\tau}^t \bar \eta_{k}}.
\end{align*} 
\end{proof}

\section{Proof of Theorem \ref{thm:fcnet}}
Although we have prove the global convergence for fully-connected networks in the first section, here we prove theorem \ref{thm:fcnet} explicitly to show all the conditions for global convergence are satisfied. 
\begin{proof}
Let us first focus on the case of   $\min(m_1,\dots,m_{H-1})\ge  n$. 
From Assumption \ref{a:input}, there exists $c>0$ such that $\|x_i\|_2^2- x_i\T x_j > c$ for any $x_i,x_j \in S_{x}$ with $i\neq j$. Using such a $c$, set $W^{(1)}_i=\alpha^{(1)} x_i \in \RR^{m_x}$ and $b^{(1)}_i=\alpha^{(1)}(c/2 - \|x_{i}\|_2^2)$ for $i=1,\dots,n$. Moreover, set $W^{(l)}_{1:n,1:n}=\alpha^{(l)} I_{n}$ and $b^{(l)}_{1:n}=-\alpha^{(l)} \mathbf{1}_{n}$ for $l=2,\dots,H$.
Set all other weights and bias to be zero. Then, for any $i\in\{1,\dots,n\}$,
$
h^{(1)}_X(w_{(1:1)}) _{ii} = \sigma(c\alpha^{(1)}/2), 
$
and for any $k\in\{1,\dots,n\}$ with  $k \neq i$,  
$
h^{(1)}_X(w_{(1:1)})_{ik} =\sigma(\alpha^{(1)} (\langle x_i,x_k\rangle- \|x_k\|_2^{2}+c/2) ) \le\sigma(-c\alpha^{(1)}/2).  
$
Since $\sigma(c\alpha^{(1)}/2)\rightarrow \infty$ and $\sigma(-c\alpha^{(1)}/2) \rightarrow 0$  
as  $\alpha^{(1)}\rightarrow \infty$, with  $\alpha^{(1)}$ sufficiently large, we have that $\sigma(c\alpha^{(1)}/2)-1\ge c^{(2)}_1$ and $\sigma(-c\alpha^{(1)}/2)-1 \le -c^{(2)}_2 $ for some $c^{(2)}_1,c^{(2)}_2>0$.
Therefore, with  $\alpha^{(1)}$ sufficiently large,
$
h^{(2)}_X(w_{(1:2)}) _{ii} =\sigma(\alpha^{(2)} (\sigma(c\alpha^{(1)}/2)-1) ) \ge\sigma(\alpha^{(2)} c^{(2)}_1) ,
$
and  
$
h^{(2)}_X(w_{(1:2)})_{ik}  \le \sigma(\alpha^{(2)}(\sigma(-c\alpha^{(1)}/2)-1)  ) \le\sigma(-\alpha^{(2)} c^{(2)}_2)   . 
$
Since $\sigma(\alpha^{(2)} c^{(2)}_1)\rightarrow \infty$ and $\sigma(-\alpha^{(2)} c^{(2)}_2) \rightarrow 0$  
as  $\alpha^{(2)}\rightarrow \infty$, with  $\alpha^{(2)}$ sufficiently large, we have that $\sigma(\alpha^{(2)} c^{(2)}_1)-1\ge c^{(3)}_1$ and $\sigma(-\alpha^{(2)} c^{(2)}_2)-1 \le -c^{(3)}_2 $ for some $c^{(3)}_1,c^{(3)}_2>0$. Note that given the $\alpha^{(1)}$, $c^{(3)}_1$ and $c^{(3)}_2$ depends only on $\alpha^{(2)}$ and does not depend on any of $\alpha^{(3)},\dots,\alpha^{(H)}$. 

Therefore, with  $\alpha^{(2)}$ sufficiently large,
$
h^{(3)}_X(w_{(1:3)}) _{ii} \ge \sigma(\alpha^{(3)} (\sigma(\alpha^{(2)} c^{(2)}_1)-1) ) \ge\sigma(\alpha^{(3)} c^{(3)}_1) ,
$
and  
$
h^{(3)}_X(w_{(1:3)})_{ik}  \le \sigma(\alpha^{(3)}(\sigma(-\alpha^{(2)} c^{(2)}_2)-1)  ) \le\sigma(-\alpha^{(3)} c^{(3)}_2)   .
$
Repeating this process for $l=1,\dots,H$, we have that with    $\alpha^{(1)},\dots,\alpha^{(H-1)}$ sufficiently large,
$$h^{(H)}_X(w_{(1:H)}) _{ii} \ge\sigma(\alpha^{(H)} c^{(H)}_1),$$ 
and 
$$
h^{(H)}_X(w_{(1:H)})_{ik} \le\sigma(-\alpha^{(H)} c^{(H)}_2),
$$
where $c^{(H)}_1=c^{(H)}_2=c/2$ if $H=1$.
Here,  $h^{(H)}_X(w_{(1:H)}) _{ii}\rightarrow \infty$ and $h^{(H)}_X(w_{(1:H)})_{ik} \rightarrow0$ as $\alpha^{(H)} \rightarrow \infty$.
Therefore, with    $\alpha^{(1)},\dots,\alpha^{(H)}$ sufficiently large,
for any $i \in \{1,\dots,n\}$,
$
\left|h^{(H)}_X(w_{(1:H)}) _{ii}   \right| > \sum_{k\neq i} \left|h^{(H)}_X(w_{(1:H)})_{ik}    \right|.
$
This means that  the matrix $[h^{(H)}_X(w_{(1:H)})_{ij} ]_{1\le i,j\le n} \in \RR^{n \times n}$
 is strictly diagonally dominant and hence is nonsingular with rank $n$.  Since the set of all columns of $h^{(H)}_X(w_{(1:H)})\in \RR^{n \times (m_H + 1)}$ contains all columns of $[h^{(H)}_X(w_{(1:H)})_{ij} ]_{1\le i,j\le n} \in \RR^{n \times n}$, this implies that $h^{(H)}_X(w_{(1:H)})$ has rank $n$ and $\varphi(w_{(1:H)})\neq 0$ for this constructed particular $w_{(1:H)}$. This proves the desired statement for the case of   $\min(m_1,\dots,m_{H-1})\ge  n$.

We now consider the remaining case of   $\min(m_1,\dots,m_{H-1})\ge  m_{x}$. Set $W^{(1)}_{1:m_x,1:m_x}=\cdots=W^{(H-1)}_{1:m_x,1:m_x}=I_{1:m_x}$,  $b^{(1)}_{1:m_x}=\alpha \mathbf{1}_{m_x}\T$, and all others to be zero. Let $\mathbf{h}(x)=h^{(H-1)}(x,w_{(1:H-1)})_{1,1:m_x}$. Then, 
$
\mathbf{h}(x)=\sigma^{\circ H-1}(x\T +\alpha\mathbf{1}_{m_x}\T), 
$
where $\sigma^{\circ l}(z)=\sigma(\sigma^{\circ l-1}(z))$ for $l \ge 1$ with         $\sigma^{\circ0}(z)=z$.
Since
$
\varsigma\sigma(z)= \ln(1 + e^{\varsigma z})= \ln((e^{-\varsigma z}+1)e^{\varsigma z})= \ln((e^{-\varsigma z}+1)e^{\varsigma z})=\ln(e^{\varsigma z})+ \ln(e^{-\varsigma z}+1)=\varsigma z+ \ln(1+e^{-\varsigma z}),
$
we have that
$
\sigma(z)=z+ \ln(1+e^{-\varsigma z})/\varsigma,
$ 
and 
\begin{align*}
\sigma^{\circ H-1}(z)&=\sigma^{\circ H-2}(z)+ \ln(1+e^{-\varsigma \sigma^{\circ H-2}(z)})/\varsigma
\\ & =z+ \sum_{l=0}^{H-2} \ln(1+e^{-\varsigma \sigma^{\circ l}(z)})/\varsigma.
\end{align*}
Therefore, 
$ 
\text{$\mathbf{h}(x)=\sigma^{\circ H-1}(x\T +\alpha\mathbf{1}_{m_x}\T) = x\T +\alpha\mathbf{1}_{m_x}\T + \psi( x\T +\alpha\mathbf{1}_{m_x}\T) $}.
$
where $\psi(z)= \sum_{l=0}^{H-2} \ln(1+e^{-\varsigma \sigma^{\circ l}(z)})/\varsigma$. Here, $\psi(z) \rightarrow 0$ as $z\rightarrow \infty$, since 
$
\sigma^{\circ l}(z)=z+ \sum_{k=0}^{l-1} \ln(1+e^{-\varsigma \sigma^{\circ k}(z)})/\varsigma \ge z 
$
and $\ln(1+e^{-\varsigma \sigma^{\circ k}(z)}) \ge 0$. From Assumption \ref{a:input}, there exists $c>0$ such that $\|x_i\|_2^2- x_i\T x_j > c$ for any $x_i,x_j \in S_{x}$ with $i\neq j$. Using such a $c>0$ as well as a $\alpha'>0$,  set $W^{(H)}_i=\alpha'x_{i}\in \RR^{m_x}$ and $b^{(H)}_i=-\alpha'\alpha\mathbf{1}_{m_x}\T x_{i}^{}+\alpha'(c/2 - \|x_i \|^2)$ for $i=1,\dots,n$. 
Set all other weights and bias to be zero. Then,
for any $i\in\{1,\dots,n\}$,
\begin{align*}
h^{(H)}_X(w_{(1:H)}) _{ii} &=\sigma(\alpha'(x\T _{i}+\alpha\mathbf{1}_{m_x}\T + \psi( x\T _{i}+\alpha\mathbf{1}_{m_x}\T))x_{i} -\alpha'\alpha\mathbf{1}_{m_x}\T x_{i}^{}+\alpha'(c/2 - \|x_i \|^2)), 
\\ & =\sigma(\alpha'(c/2+\psi( x\T _{i}+\alpha\mathbf{1}_{m_x}\T)x_{i})),
\end{align*}
and for any $j\in\{1,\dots,n\}$ with  $j \neq i$,
\begin{align*}
h^{(H)}_X(w_{(1:H)})_{ij} &=\sigma(\alpha'(x\T _{i}+\alpha\mathbf{1}_{m_x}\T + \psi( x\T _{i}+\alpha\mathbf{1}_{m_x}\T))x_{j} -\alpha'\alpha\mathbf{1}_{m_x}\T x_{j}^{}+\alpha'(c/2 - \|x_j \|^2))
\\ & =\sigma(\alpha'(x\T _{i}x_{j}-\|x_j \|^2+c/2+\psi( x\T _{i}+\alpha\mathbf{1}_{m_x}\T)x_{j})) \\ & \le \sigma(\alpha'(-c/2+\psi( x\T _{i}+\alpha\mathbf{1}_{m_x}\T)x_{j})),   
\end{align*}  
where the last inequality used the monotonicity of $\sigma$ and $\|x_i\|_2^2- x_i\T x_j > c$. Since $\sigma(\alpha'c/2)\rightarrow \infty$ and $\sigma(-\alpha'c/2) \rightarrow 0$  
as  $\alpha'\rightarrow \infty$ and  $\psi( x\T _{i}+\alpha\mathbf{1}_{m_x}\T) \rightarrow 0$ as $\alpha\rightarrow \infty$, we have that with  $\alpha,\alpha'$ sufficiently large, for any $i \in \{1,\dots,n\}$,
$
\left|h^{(H)}_X(w_{(1:H)}) _{ii}   \right| > \sum_{k\neq i} \left|h^{(H)}_X(w_{(1:H)})_{ik}    \right|,
$
which means that the matrix $[h^{(H)}_X(w_{(1:H)})_{ij} ]_{1\le i,j\le n} \in \RR^{n \times n}$
 is strictly diagonally dominant and hence is nonsingular with rank $n$.  Since the set of all columns of $h^{(H)}_X(w_{(1:H)})\in \RR^{n \times (m_H + 1)}$ contains all columns of $[h^{(H)}_X(w_{(1:H)})_{ij} ]_{1\le i,j\le n} \in \RR^{n \times n}$, this implies that $h^{(H)}_X(w_{(1:H)})$ has rank $n$ and $\varphi(w_{(1:H)})\neq 0$ for this constructed particular $w_{(1:H)}$. This proves the desired statement for the last case of   $\min(m_1,\dots,m_{H-1})\ge  m_{x}$.

\end{proof}

\section{Proof of Proposition \ref{prop:analytic_bn}}
Now we prove proposition \ref{prop:analytic_bn} to show that batch-normalization layer is real analytic.
\begin{proof}
As batch normalization works for coordinate-wise, we consider its behavior for an arbitrary coordinate: 
$$
\BN_{\gamma,\beta}(z) = \gamma \frac{z-\mu}{\sqrt{\sigma^2 + \epsilon}} + \beta 
$$
 where $z\in \RR$. Here, $\mu$ and $\sigma^2$ depend on other samples as well:
\begin{align*}
  \mu &= \frac{1}{|S|} \sum_{i\in S}  z_i \\
  \sigma^2 &= \frac{1}{|S|} \sum_{i\in S} (z_i - \mu)^2
\end{align*}
where $S$ is an arbitrary subset of $\{1,2,\dots,n\}$.  Since a composition of real analytic functions is real analytic and an affine map is real analytic,   $(z,\gamma,\beta) \mapsto \BN_{\gamma,\beta}(z)=\gamma \frac{z-\mu}{\sqrt{\sigma^2 + \epsilon}}+ \beta $ is real analytic   if $z \mapsto \frac{z-\mu}{\sqrt{\sigma^2 + \epsilon}}$ is real analytic. Since the quotient of two real analytic functions remain real analytic if the devisor is nowhere zero, $z \mapsto \frac{z-\mu}{\sqrt{\sigma^2 + \epsilon}}$ is real analytic if $z \mapsto z-\mu$ and $z \mapsto\sqrt{\sigma^2 + \epsilon}$ are real analytic. Here, $z-\mu$ is real analytic, and $z \mapsto\sqrt{\sigma^2 + \epsilon}$ is real analytic since the square root function is a real analytic function on the interval  $(0, \infty)$. Therefore,  $(z,\gamma,\beta) \mapsto \BN_{\gamma,\beta}(z)$ is real analytic. 
\end{proof}

\section{Proof of Theorem \ref{thm:all}}
The proof of Theorem \ref{thm:all} builds upon the  proofs of Theorem \ref{thm:fcnet} and \ref{thm:general}. In addition to the proofs of Theorem \ref{thm:fcnet} and \ref{thm:general}, we utilize the following two lemmas. Let $f_{i} (w)=f(x_i,w)$.

\begin{lemma} \label{lemma:pgd_approx}
Suppose Assumption \ref{a:loss} hold. Then, at any differentiable point $w\in \RR^d$ of $\Lcal$, it holds  that for any $w^*\in \RR^d$,
$$
\Lcal(w) \le \Lcal_w(w^{*})+\| \nu \odot w -w^* \| \|\nabla \Lcal(w)\|,  
$$
where 
$$
\Lcal_w(w^{*})=\frac{1}{n}\sum_{i=1}^n  \ell\left(f_{w}(x,w^{*}), y_{i}\right),
$$
and
$$
f_{w}(x,w^{*})=\sum_{k=1}^{d} w^{*}_k \frac{\partial f_{i}(w)}{\partial w_k}.
$$
\end{lemma}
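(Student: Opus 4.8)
The plan is to recognize $\Lcal_w$ as the empirical risk of the \emph{linearized} (first-order / NTK) model centered at $w$, and to exploit that this linearization is \emph{exact} on the last layer. For $v\in\RR^d$ define $\Phi_i(v)=\sum_{k=1}^d v_k\,\partial f_i(w)/\partial w_k$, which is linear in $v$, and $\widetilde\Lcal_w(v)=\frac1n\sum_{i=1}^n\ell_i(\Phi_i(v))$, so that $\widetilde\Lcal_w(w^*)=\Lcal_w(w^*)$. By Assumption \ref{a:loss} each $\ell_i$ is convex and differentiable, hence $\widetilde\Lcal_w$ is convex and differentiable with $\nabla\widetilde\Lcal_w(v)=\frac1n\sum_i J_i^\top\nabla\ell_i(\Phi_i(v))$, where $J_i=\partial f_i(w)/\partial w$ denotes the Jacobian of $f_i$ at $w$ (so $\Phi_i(v)=J_iv$).

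First I would evaluate $\widetilde\Lcal_w$ and its gradient at $v=\nu\odot w=[\mathbf 0_{d_{1:H}}^\top,(w_{(H+1)})^\top]^\top$. Here the zero entries of $\nu\odot w$ annihilate all hidden-layer partials, leaving $\Phi_i(\nu\odot w)=\sum_{k:\,w_k\in w_{(H+1)}}w_k\,\partial f_i(w)/\partial w_k$; and since $f(x_i,w)^\top=[I_{m_y}\otimes[h^{(H)}(x_i,w_{(1:H)}),1]]\,w_{(H+1)}$ by Lemma \ref{lemma:f_manipulations} is \emph{linear} in the vector $w_{(H+1)}$, the Euler-type identity for linear maps gives $\Phi_i(\nu\odot w)=f_i(w)$. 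Consequently $\widetilde\Lcal_w(\nu\odot w)=\Lcal(w)$, and, using that $w$ is a differentiable point of $\Lcal$ so that $\nabla\Lcal(w)=\frac1n\sum_i J_i^\top\nabla\ell_i(f_i(w))$ by the chain rule, also $\nabla\widetilde\Lcal_w(\nu\odot w)=\nabla\Lcal(w)$.

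Then I would invoke the first-order characterization of convexity for $\widetilde\Lcal_w$ at the base point $\nu\odot w$ against the comparison point $w^*$: $\widetilde\Lcal_w(w^*)\ge\widetilde\Lcal_w(\nu\odot w)+\nabla\widetilde\Lcal_w(\nu\odot w)^\top(w^*-\nu\odot w)=\Lcal(w)+\nabla\Lcal(w)^\top(w^*-\nu\odot w)$. Rearranging yields $\Lcal(w)\le\Lcal_w(w^*)+\nabla\Lcal(w)^\top(\nu\odot w-w^*)$, and Cauchy--Schwarz on the last term gives $\Lcal(w)\le\Lcal_w(w^*)+\|\nu\odot w-w^*\|\,\|\nabla\Lcal(w)\|$, as claimed.

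The only genuinely delicate step is the identity $\Phi_i(\nu\odot w)=f_i(w)$, i.e.\ that evaluating the linearized model at $\nu\odot w$ reproduces the network output exactly; this rests entirely on the last layer (together with its bias) being a linear map of $w_{(H+1)}$, which is precisely the content of Lemma \ref{lemma:f_manipulations}. Everything else is routine bookkeeping with the chain rule plus one standard convexity inequality, so I expect the write-up to be short.
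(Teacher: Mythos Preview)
Your proposal is correct and follows essentially the same route as the paper: both hinge on the identity $f_i(w)=\sum_k(\nu\odot w)_k\,\partial f_i(w)/\partial w_k$ (coming from linearity of the last layer, i.e.\ Lemma~\ref{lemma:f_manipulations}), then apply convexity of $\ell_i$ and Cauchy--Schwarz. The only cosmetic difference is that you package the convexity step as convexity of the composite $\widetilde\Lcal_w$ at the base point $\nu\odot w$, whereas the paper applies the first-order condition to each $\ell_i$ directly before summing; the resulting inequality and the chain-rule identification $\nabla\widetilde\Lcal_w(\nu\odot w)=\nabla\Lcal(w)$ are identical.
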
 
\begin{proof}
Let   $w\in \RR^{d} $ be an arbitrary differentiable  point of $\Lcal$. We first observe that   $f_{i}(w)=\sum_{k=1}^{d} (\nu \odot w)_k\frac{\partial f_{i}(w)}{\partial w_k}$ for all $i \in \{1,\dots,n\}$.  Thus, for any $w^* \in \RR^{d}$, 
\begin{align*}
 \Lcal_w(w^{*})
& \ge \frac{1}{n} \sum_{i=1}^n \left[ \ell_{i}(f_{i}(w))+ \nabla\ell_{i}(f_{i}(w))(f_{w}(x,w^{*})- f(x,w)) \right]
\\ & =\Lcal(w) - \sum_{k=1}^{d} \left((\nu \odot w)_k -w_k^* \right) \frac{1}{n}\sum_{i=1}^n  \nabla \ell_{i}(f_{i}(w))\frac{\partial f_{i}(w)}{\partial w_k}  
\\ & = \Lcal(w) - \sum_{k=1}^{d} \left((\nu \odot w)_k -w_k^* \right) \nabla_{w_k}\Lcal(w),
\\ & = \Lcal(w) -\left((\nu \odot w) -w^* \right)\T \nabla \Lcal(w)
\\ & \ge \Lcal(w) - \|(\nu \odot w) -w^* \| \|\nabla \Lcal(w)\|   
\end{align*}
where the first line follows from Assumption \ref{a:loss} (differentiable and convex  $\ell_{i}$).  
\end{proof} 

\begin{lemma} \label{lemma:known+alpha_3}
Assume that $\|\nabla\Lcal(z)-\nabla\Lcal(z')\|\le L \|z-z'\|$ for all $z,z'$ in the domain of $\Lcal$. Define the sequence $(z^{t})_{t=0}^\infty$ by $z^{t+1} = z^{t}- \frac{2\bar \eta}{L}\nabla \Lcal(z^{t})$ for any $t \ge 1$ with an initial parameter vector $z^{0}$. Then, 
$$
\min_{0\le t \le T} \|\nabla \Lcal(z^{t})\| \le\frac{1}{\sqrt{T+1}}    \sqrt{\frac{L(\Lcal(z^{0})-\Lcal(z^{*}))}{2\bar \eta(1-\bar \eta)}}. 
$$   
\end{lemma}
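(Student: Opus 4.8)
The plan is to run the standard telescoping argument for gradient descent with $L$-Lipschitz gradient. First I would apply the descent inequality of Lemma~\ref{lemma:known_1} to $\Lcal$ with $z = z^{t}$ and $z' = z^{t+1} = z^{t} - \frac{2\bar\eta}{L}\nabla\Lcal(z^{t})$, which is permissible because $\nabla\Lcal$ is $L$-Lipschitz on the domain of $\Lcal$. Substituting $z' - z = -\frac{2\bar\eta}{L}\nabla\Lcal(z^{t})$ gives
$$
\Lcal(z^{t+1}) \le \Lcal(z^{t}) - \frac{2\bar\eta}{L}\|\nabla\Lcal(z^{t})\|^{2} + \frac{L}{2}\cdot\frac{4\bar\eta^{2}}{L^{2}}\|\nabla\Lcal(z^{t})\|^{2} = \Lcal(z^{t}) - \frac{2\bar\eta(1-\bar\eta)}{L}\|\nabla\Lcal(z^{t})\|^{2}.
$$
Here $\bar\eta \in (0,1)$, consistent with the hypotheses of Theorem~\ref{thm:all}, so that the coefficient $2\bar\eta(1-\bar\eta)/L$ is strictly positive; in particular $\Lcal(z^{t})$ is nonincreasing, which also keeps the iterates inside the chosen sublevel-set domain on which the Lipschitz bound holds.

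Next I would rearrange the inequality to $\frac{2\bar\eta(1-\bar\eta)}{L}\|\nabla\Lcal(z^{t})\|^{2} \le \Lcal(z^{t}) - \Lcal(z^{t+1})$ and sum over $t = 0,1,\dots,T$. The right-hand side telescopes to $\Lcal(z^{0}) - \Lcal(z^{T+1}) \le \Lcal(z^{0}) - \Lcal(z^{*})$, where the last step uses that $z^{*}$ is a global minimizer of $\Lcal$, so $\Lcal(z^{T+1}) \ge \Lcal(z^{*})$. Thus
$$
\frac{2\bar\eta(1-\bar\eta)}{L}\sum_{t=0}^{T}\|\nabla\Lcal(z^{t})\|^{2} \le \Lcal(z^{0}) - \Lcal(z^{*}).
$$
Finally I would bound the sum below by $(T+1)\min_{0\le t\le T}\|\nabla\Lcal(z^{t})\|^{2}$, divide, and take square roots to obtain exactly
$$
\min_{0\le t\le T}\|\nabla\Lcal(z^{t})\| \le \frac{1}{\sqrt{T+1}}\sqrt{\frac{L(\Lcal(z^{0})-\Lcal(z^{*}))}{2\bar\eta(1-\bar\eta)}}.
$$

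There is no real difficulty in this computation; the only points that need a line of care are (i) ensuring each segment $[z^{t}, z^{t+1}]$ lies in the domain of $\Lcal$ so that Lemma~\ref{lemma:known_1} applies --- handled by an induction that simultaneously tracks the monotonicity of $\Lcal(z^{t})$ and membership in the sublevel set, as noted after Theorem~\ref{thm:all} --- and (ii) the sign requirement $\bar\eta \in (0,1)$, without which the per-step decrease can fail to be nonnegative. The rest is the routine telescoping and averaging argument, and the lemma then plugs directly into the proof of Theorem~\ref{thm:all} via Lemma~\ref{lemma:pgd_approx}.
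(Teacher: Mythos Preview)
Your proposal is correct and follows essentially the same route as the paper: apply the descent inequality (Lemma~\ref{lemma:known_1}) with the step $z^{t+1}-z^{t}=-\frac{2\bar\eta}{L}\nabla\Lcal(z^{t})$ to obtain the per-step decrease $\Lcal(z^{t})-\Lcal(z^{t+1})\ge \frac{2\bar\eta(1-\bar\eta)}{L}\|\nabla\Lcal(z^{t})\|^{2}$, telescope over $t=0,\dots,T$, lower-bound the sum by $(T+1)\min_t\|\nabla\Lcal(z^{t})\|^{2}$, and take square roots. Your added remarks on $\bar\eta\in(0,1)$ and on the iterates remaining in the sublevel-set domain are helpful clarifications that the paper leaves implicit.
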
 
\begin{proof}
Using Lemma \ref{lemma:known_1} and $z^{t+1} - z^{t}= - \frac{2\bar \eta}{L}\nabla \Lcal(z^{t})$,
\begin{align*}
\Lcal(z^{t+1}) &\le \Lcal(z^{t})  + \nabla \Lcal(z^{t})  \T (z^{t+1}-z^{t}) + \frac{L}{2} \|z^{t+1}-z^{t}\|^2
\\ & = \Lcal(z^{t})-\frac{2\bar \eta}{L} \|\nabla \Lcal(z^{t})\|^{2} +  \frac{2\bar \eta^{2}}{L}\|\nabla \Lcal(z^{t})\|^2    
\\ & = \Lcal(z^{t})-\frac{2\bar \eta}{L} (1-\bar \eta)\|\nabla \Lcal(z^{t})\|^{2},    
\end{align*} which implies that
$$
\|\nabla \Lcal(z^{t})\|^{2} \le\frac{L}{2\bar \eta(1-\bar \eta)}(\Lcal(z^{t})-\Lcal(z^{t+1})).
$$
By summing up both sides over the time,
$$
(T+1) \left(\min_{0\le t \le T} \|\nabla \Lcal(z^{t})\|^{2} \right) \le \sum_{t=0}^T \|\nabla \Lcal(z^{t})\|^{2} \le\frac{L(\Lcal(z^{0})-\Lcal(z^{T+1}))}{2\bar \eta(1-\bar \eta)}   .  
$$
Therefore,
$$
\min_{0\le t \le T} \|\nabla \Lcal(z^{t})\|\le \sqrt{\frac{L(\Lcal(z^{0})-\Lcal(z^{T+1}))}{2\bar \eta(1-\bar \eta)(T+1)}} \le\frac{1}{\sqrt{T+1}}    \sqrt{\frac{L(\Lcal(z^{0})-\Lcal(z^{*}))}{2\bar \eta(1-\bar \eta)}}.
$$

\end{proof}

The proof of Theorem \ref{thm:all} is  a result of a combination of 
these lemmas and the proofs of of Theorem \ref{thm:fcnet} and  \ref{thm:general}.
\begin{proof}[Proof of Theorem \ref{thm:all}]
From Lemma \ref{lemma:pgd_approx}, for any $w^*_k \in \RR^d$,
 $$
 \min_{\tau \le k \le t}L(w^{k}) \le\min_{\tau\le k \le t} \left(L_{w^k}(w^{*}_k)+\|(\nu \odot w^k) - w^*_k \| \|\nabla L(w^{k})\| \right).
$$
From the proofs of Theorems \ref{thm:fcnet} and \ref{thm:general}, if  $\min(m_1,\dots,m_{H-1})\ge \min(m_x, n)$ and $m_H \ge n$ or if Assumption  \ref{a:existance_of_w} hold, then we have that $\rank \left([h_{X}^{(H)}(w_{(1:H)}^{\tau}), \mathbf{1}_{n}]\right)=n$. By noticing that $[h_{X}^{(H)}(w_{(1:H)}^{\tau}), \mathbf{1}_{n}]$ appears in the block diagonal matrix $\frac{\partial \vect(f_{X}(w^{\tau}))}{\partial w_{(H+1)}}$ with $m_y$ blocks (it is block diagonal because the $k$-th output unite does not depend on the parts of last layers' weights and bias that are not connected to the $k$-th output unit), this implies that $\rank \left(\frac{\partial \vect(f_{X}(w^{\tau}))}{\partial w}\right) = nm_y$. From the assumption of $\rank \left(\mathbf{K}(w^k)\right)\ge\rank \left(\mathbf{K}(w^\tau)\right)$, this implies that $\rank \left(\frac{\partial \vect(f_{X}(w^{k}))}{\partial w}\right)=nm_y$.  Since $\rank \left(\frac{\partial \vect(f_{X}(w^{\tau}))}{\partial w}\right)=nm_{y}$, 
$$
\{\vect(f_X(w)) \in \RR^{n m_y}:w\in \RR^d \} = \left\{f_{w^k}(X,w^*_k)=\sum_{j=1}^{d} (w^*_k)_j \frac{\partial \vect(f_{X}(w^{k}))}{\partial w_j} =\frac{\partial \vect(f_{X}(w^{k}))}{\partial w}w^*_k\in \RR^{n m_y}:w^*_k\in \RR^d \right\}.
$$
This implies that for any $k$, there exists a $\hat \omega^k \in \RR^d$ such that  $\Lcal_{w^k}(\hat \omega^k)=\Lcal(w^{*})$ where $w^{*}$ is a global minimum. Thus,
 $$
 \min_{\tau\le k \le t}L(w^{k}) \le \Lcal(w^{*}) + \bar R\min_{\tau\le k \le t}  \|\nabla L(w^{k})\|.
$$
where  $\bar R = \max_{\tau\le k \le t} \min_{\hat \omega^{k}\in \bar Q_{k} }\|(\nu \odot w^{k} )- \hat \omega^{k}\|$ where $ \bar Q_k = \argmin_{\hat \omega \in \RR^d} \frac{1}{n} \sum_{i=1}^n \ell(\sum_{j=1}^{d} \hat \omega_j \frac{\partial f(x_i,w^{k})\T}{\partial w_j},y_{i})$.  From Lemma \ref{lemma:known+alpha_3},
$$
\min_{\tau\le k \le t}L(w^{k}) \le \Lcal(w^{*}) + \frac{1}{\sqrt{t-\tau+1}}    \sqrt{\frac{LR^{2}(\Lcal(w^{\tau})-\Lcal(w^{*}))}{2\bar \eta(1-\bar \eta)}} 
$$
\end{proof}

\end{document}